\documentclass[]{bytedance_seed}

\usepackage[toc,page,header]{appendix}

\usepackage{xspace}
\usepackage{mathastext}
\usepackage{amsfonts}

\usepackage{minitoc}

\usepackage{pifont}
\newcommand{\cmark}{\ding{51}}%
\newcommand{\xmark}{\ding{55}}%

\usepackage{amsthm}

\usepackage{wrapfig,lipsum,booktabs}

\theoremstyle{plain}
\newtheorem{theorem}{Theorem}[section]
\newtheorem{proposition}[theorem]{Proposition}
\newtheorem{lemma}[theorem]{Lemma}

\theoremstyle{definition}
\newtheorem{definition}[theorem]{Definition}

\theoremstyle{remark}

\DeclareRobustCommand{\modelname}{\textsc{CompTok}\xspace}
\DeclareRobustCommand{\armodelname}{$\epsilon$MaskGIT\xspace}

\usepackage{array}

\newlength\savewidth\newcommand\shline{\noalign{\global\savewidth\arrayrulewidth
  \global\arrayrulewidth 1pt}\hline\noalign{\global\arrayrulewidth\savewidth}}

\newcommand{\tablestyle}[2]{\setlength{\tabcolsep}{#1}\renewcommand{\arraystretch}{#2}\centering\footnotesize}
\definecolor{baselinecolor}{gray}{.9}

\title{Composable Visual Tokenizers with Generator-Free Diagnostics of Learnability}

\author[1,2]{Bingchen Zhao}
\author[1]{Qiushan Guo}
\author[1]{Ye Wang}
\author[1]{Yixuan Huang}
\authornewline
\author[1]{Zhonghua Zhai}
\author[1,\dagger]{Yu Tian}

\affiliation[1]{ByteDance Seed}
\affiliation[2]{University of Edinburgh}

\contribution[\dagger]{Corresponding authors}

\abstract{
We introduce \modelname, a training framework for learning visual tokenizers whose tokens are enhanced for compositionality.
\modelname uses a token-conditioned diffusion decoder.
By employing an InfoGAN-style objective, where we train a recognition model to predict the tokens used to condition the diffusion decoder using the decoded images, we enforce the decoder to not ignore any of the tokens.
To promote compositional control, besides the original images, \modelname also trains on tokens formed by swapping token subsets between images, enabling more compositional control of the token over the decoder.  
As the swapped tokens between images do not have ground truth image targets, we apply a manifold constraint via an adversarial flow regularizer to keep unpaired swap generations on the natural-image distribution.
The resulting tokenizer not only achieves state-of-the-art performance on image class-conditioned generation, but also demonstrates properties such as swapping tokens between images to achieve high level semantic editing of an image.
Additionally, we propose two metrics that measures the landscape of the token space that can be useful to describe not only the compositionality of the tokens, but also how easy to learn the landscape is for a generator to be trained on this space.
We show in experiments that \modelname can improve on both of the metrics as well as supporting state-of-the-art generators for class conditioned generation.
}

\date{\today}
\correspondence{Bingchen Zhao at \email{zhaobc.gm@gmail.com}, Yu Tian at \email{tianyu0124@gmail.com}}

\begin{document}
\maketitle

\begin{figure*}[t]
\centering
\includegraphics[width=\linewidth]{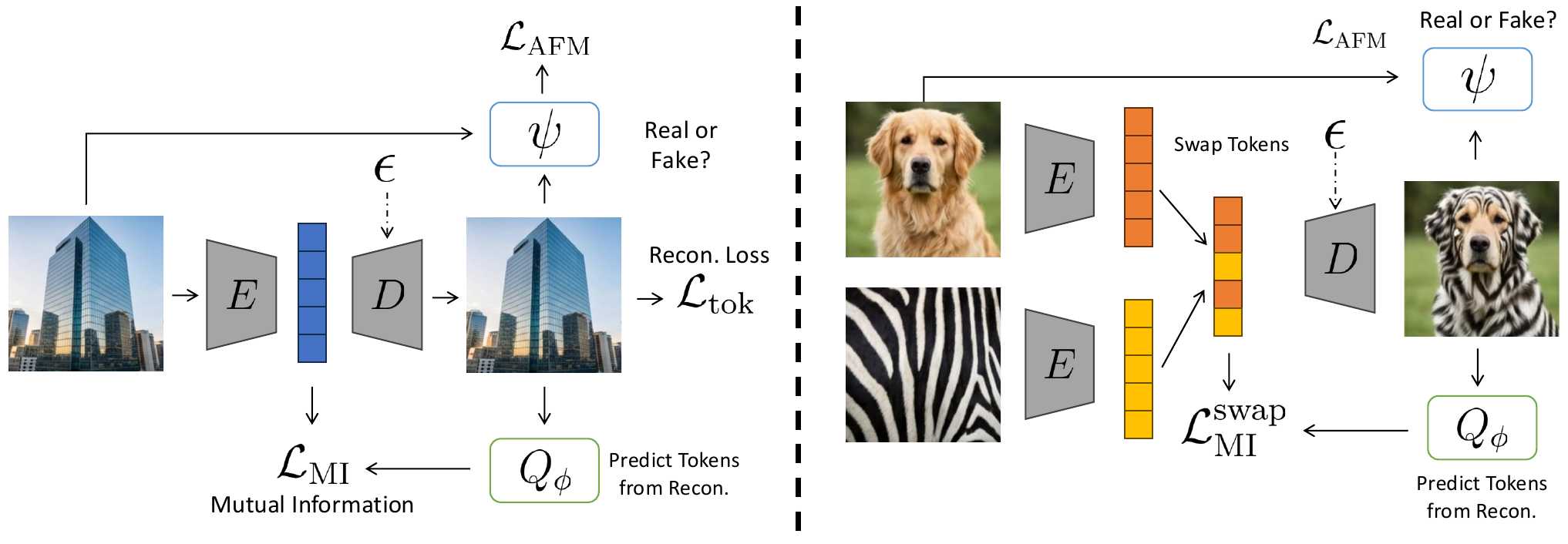}
\vspace{-1.5em}
\caption{\modelname overview. Left: a reconstruction pathway where the encoder $E$ produces a 1D token sequence that is decoded by $D$; a recognition head $Q_\phi$ must recover the tokens from the decoded image (mutual-information loss $\mathcal{L}_{\mathrm{MI}}$), while an adversarial flow/density model $\psi$ enforces realism via $\mathcal{L}_{\mathrm{AFM}}$ alongside the reconstruction loss $\mathcal{L}_{\mathrm{tok}}$. Right: a swap pathway where tokens from two images are partially exchanged, the mixed token is decoded, and the same recoverability and realism constraints $\mathcal{L}_{\mathrm{MI}}^{\mathrm{swap}}$ and $\mathcal{L}_{\mathrm{AFM}}$ train tokens to be non-degenerate and compositional under edits.
}
\label{fig:method}
\end{figure*}

\section{Introduction}
Visual tokenizers define the representation space in which modern image generators operate. 
Most established tokenizers produce 2D latent grids, spatial feature maps that preserve locality and are naturally consumed by convolutional or U-Net style decoders.
Increasingly, however, recent work has explored 1D token sequences that linearize visual content into a list of tokens, motivated by tighter integration with transformer-based priors~\cite{titok}, flexible rate allocation~\cite{flextok}, and a more “language-like” interface for autoregressive modeling and editing~\cite{semanticist}. 
This shift raises a foundational question: what properties should a tokenizer’s tokens satisfy in order to serve as a reliable representation space for generation and manipulation?

For tokenizers to be broadly useful beyond reconstruction, their tokens should satisfy two properties. 
First, token usefulness: each token (or token group) should carry non-trivial, usable information about the image, rather than being redundant given other tokens~\cite{vqganlc}. 
Prior works~\cite{vqganlc} studied that the utilization of codebooks matters for the performance of tokenizers, indicating that we need the tokens to carry as much of useful information as possible.
Second, token compositionality: tokens should behave as stable control variables, swapping or editing a subset of tokens should yield predictable changes while keeping the decoded image on the natural-image manifold.
\citet{beyer2025highly} shows that 1D tokenizers can be used directly for generation tasks because it is highly compressed so the tokens can demonstrate stable control over the image.
These properties directly determine whether the token from the tokenizer can support training of generators for fine-grained control and modular editing.

We study 1D tokenizers in this work, as they are the instances where the tokens carry semantic level information over the images rather than patch level information~\cite{titok,beyer2025highly,semanticist}.
Existing work has largely optimized 1D tokenizers for rate–distortion~\cite{titok} and modeling efficiency~\cite{vqganlc,llamagen}, and has demonstrated that imposing structure can yield strong generation with relatively few tokens~\cite{flextok,semanticist}. 
This is valuable, but it also highlights what is missing: current evaluations and training objectives do not explicitly enforce that \emph{all} tokens remain causally effective and compositional.
In other words, the tokenizers can learn highly entangled tokens under these constriants, and these complex interactions between tokens can make it hard for generators to learn on the token space.
As a result, reconstruction fidelity can be a poor predictor of generative performance and of controllability as shown in prior works~\cite{vavae}: a tokenizer may reconstruct well yet induce a latent space whose ``usable region" is fragmented or whose subset of tokens are effectively ignored by the generator~\cite{flextok,semanticist}.

In this work, we propose \modelname, a 1D tokenizer training framework designed to produce compositional tokens that provides a easier to learn landscape for the generators.
\modelname leverages a diffusion-based decoder following prior works~\cite{flextok,semanticist} as this is able to decouple the frequency information from the tokens~\cite{semanticist}.
Additionally, \modelname uses an InfoGAN-style loss that requires the decoded images from the decoder can be used by a recognition model to predict the exact tokens the images is decoded from.
This requires the tokens to be learned in a causally effective manner where the decoder cannot ignore them, otherwise it will get penalized.
To further strength the compositional control of tokens over the decoder, the decoder of \modelname trains on tokens formed by swapping token subsets between images, and enforces realism for these compositions of tokens via a manifold constraint implemented with an adversarial flow regularizer~\cite{lin2025adversarial}.
This swap-regularized training expands the set of valid token compositions and strengthens the interventional effect of tokens, enabling strong performance in class-conditioned image generation.

Our second contribution is a pair of metrics for tokenizer usability that go beyond reconstruction fidelity. 
Average information gain (AvgIG) measures how reconstruction progress is distributed during latent optimization starting with a random token to reconstruct a given input image.
AvgIG measures on average, how much information is gained by each step of optimization. If the landscape is flat around a real token, AvgIG will be small as optimization gains less information at each step. This implies the decoder is ignoring token changes, a phenomenon we term Token Neglect. Consequently, a higher AvgIG is desirable to ensure the decoder remains sensitive to token changes, thereby guaranteeing that every token contributes effectively.
Mode connectivity (MC) characterizes latent geometry by measuring whether high-quality real tokens, and critically, mixed tokens produced by token swapping, lie in a connected, low-barrier region, indicating whether the representation is learnable for a generator and stable under compositional edits. 
If the landscape has high walls or spikes between two valid real tokens, the space is fragmented. 
This makes it impossible for a generator to interpolate or transition between concepts encoded by different tokens.
A higher MC would indicate that the representation avoids fragmentation and is stable under compositional edits.
Crucially, these metrics are complementary: AvgIG ensures the ``basins" of the loss landscape are steep and well-defined (preventing token neglect), while MC ensures these basins are connected by smooth, low-loss paths (preventing manifold artifacts).

Empirically, we evaluate tokenizers built from diverse methods~\cite{rae,VQVAE}. 
We show that AvgIG and MC have substantial predictive power for downstream generation and control, and that \modelname consistently improves both metrics and generative performance—demonstrating that practical tokenizer quality is governed not only by reconstruction, but by token utilization and the geometry of compositional latent space.

\section{Related Work}
\label{sec:related_work}

\noindent \textbf{Image tokenization for efficient generative modeling.}
Visual tokenizers compress images into latent tokens that make training and sampling feasible for large generators. 
Classical approaches include VAEs~\cite{vae} and VQ-VAE~\cite{VQVAE}, which introduced a learnable codebook for mapping images to discrete tokens. 
Subsequent work improved perceptual fidelity and sample sharpness via adversarial and perceptual losses~\cite{VQGAN} and increased representational capacity via multi-stage or residual quantization~\cite{rqvae}. 
Codebook under-utilization and commitment pathologies have been actively studied; MAGVIT-v2~\cite{magvitv2} proposes Look-up Free Quantization (LFQ) to mitigate these issues. 
More recent tokenizers modernize architectures and training losses, including binary quantization and large-scale conditional generation~\cite{maskbit} and scaling studies for ViT-based tokenizers~\cite{vitok,vit}. 
A complementary line leverages pretrained visual foundation models (e.g., CLIP/MAE/DINOv2) to enrich token semantics or stabilize training~\cite{radford2021clip,MAE,dinov2,vqganlc,lightningdit}.

\noindent \textbf{From 2D grids to 1D causal token sequences.}
Many tokenizers produce 2D grids of latent tokens aligned to spatial patches, which is convenient for convolutional decoders but does not directly provide a causal ordering for autoregressive modeling. 
Several works propose 1D tokenizations that impose a causal dependency, enabling LLM-style decoding for images and more natural integration with autoregressive backbones. 
TiTok~\cite{titok} and SEED~\cite{SEED} are representative early 1D tokenizers; subsequent work studies how causal ordering benefits autoregressive modeling~\cite{CRT}.
VAR~\cite{var} reframes visual AR modeling as next-scale prediction and introduces a multi-scale residual quantization tokenizer that produces low-to-high resolution codes. 
Adaptive-length tokenization has also been explored, where token budget depends on image content or downstream needs~\cite{alit}. 
These directions largely optimize for compression, reconstruction, and likelihood under a chosen generator, while the \emph{meaning and controllability of individual tokens} remain less explicitly constrained.

\noindent \textbf{Token usefulness and controllable manipulation.}
A recurring practical gap is that strong reconstructions do not guarantee that tokens are \emph{beneficial} for a downstream generator~\cite{vavae}, nor that token edits stay on the natural-image manifold~\cite{beyer2025highly}. 
Prior work on ordered tokens~\cite{semanticist,flextok} demonstrates that token prefixes can suffice for generation and can improve efficiency, but such results do not directly address our objective: we treat tokens as \emph{compositional control variables} and prioritize (i) non-degeneracy (each token has interventional effect) and (ii) validity under mixing (swaps remain realistic and stable). 
\modelname differs in that it explicitly targets token \emph{usefulness} and \emph{compositionality} through training objectives and diagnostics, rather than focusing primarily on adaptive budget or prefix sufficiency.

\noindent \textbf{Diffusion and autoregressive generators with token conditioning.}
Modern high-fidelity image generators are dominated by diffusion-based models~\cite{DDPM,ddim,dhariwal2021diffusion,dit} and large autoregressive models~\cite{pixelcnn,pixelrnn,llamagen}. 
Latent diffusion~\cite{LDM} popularized performing diffusion in the latent space of a pretrained tokenizer~\cite{VQGAN}, substantially reducing compute and enabling scalable text/image-conditional generation. 
Autoregressive approaches require an ordering over tokens; both fixed orderings and randomized orders have been explored~\cite{maskgit,RAR,mar}. 
These models typically assume a tokenizer as a front-end, which provides the data distribution to train the generation model on.
\modelname is a tokenizer and aim to make the tokenizer directly token neglect and encourage compositional control, aiming to improve downstream generation and editing rather than reconstruction alone.

\section{Method}
\label{sec:method}

\begin{figure*}[t]
\centering
\includegraphics[width=\linewidth]{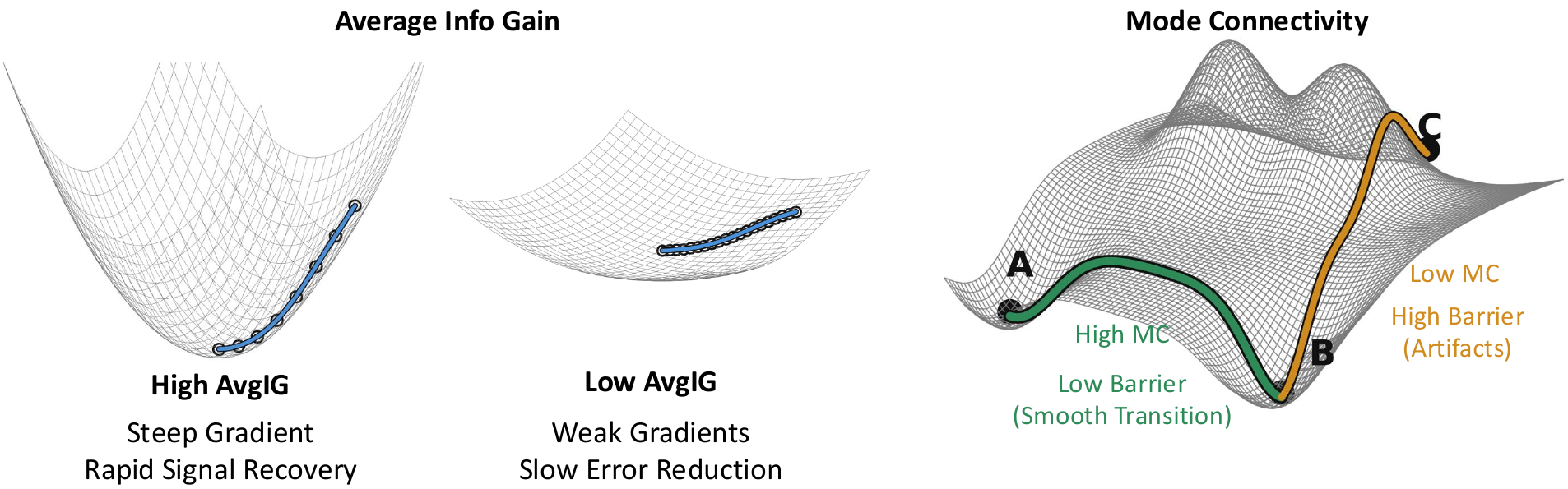}
\vspace{-1.0em}
\caption{AvgIG and MC are computed on the tokenizer (E,D) to evaluate the quality of the tokenizer and the downstream generator. AvgIG measures the average information gain per optimization step when fitting a latent token to a target image, and MC measures the pairwise mode connectivity of the tokenizer token space. MC is a measure of the local geometric smoothness of the tokenizer token space.}
\label{fig:metrics}
\end{figure*}

\subsection{Problem Setting}
\label{sec:setting}

We study a two-stage pipeline: a learned visual tokenizer and a downstream conditional generator trained on the tokenizer's token space.
The tokenizer consists of an encoder--decoder pair $(E,D)$ mapping images $x\in\mathcal{X}$ to a structured latent token $z\in\mathcal{Z}$ and back:
$z = E(x)$, $\hat{x} = D(z)$.
We represent the token as a 1D token sequence (or token groups) $z=(z_1,\dots,z_K)$, and we do not perform vector quantatization following~\citet{semanticist}.
We use $\mathcal{X}$ to denotes the full space of all real images, and in practice for training tokenizers we would have a dataset $\mathcal{D}$ to sample data from.
In this paper, $D$ is implemented as a \emph{diffusion-based decoder} that reconstructs $x$ conditioned on $z$, following prior diffusion-decoder tokenizers~\cite{semanticist}.
Concretely, $D$ is trained to reconstruct $x$ from $z$ via a conditional denoising objective~\cite{DDPM}.

Separately, we train a downstream generator $G$ \emph{on the token space} produced by the tokenizer.
$G$ can be an image generator conditioned on class labels or text, generating the image tokens to be decoded by $D$ for pixels.
Critically, $G$ is \emph{not} part of the tokenizer.
It is trained after $(E,D)$ and its success depends on the \emph{quality and geometry} of the token space $\mathcal{Z}$ induced by the tokenizer.

The most common evaluations on $(E,D)$ are to measure the Fréchet Inception Distance (FID) of the reconstructed images~\cite{fid} to a reference set.
The evaluation on $G$ is done similarly by measuring the FID of the generated images by sampling from $G$ conditioned on class labels.
We term the FID of the reconstructed images as rFID and the FID of the generated images as gFID.

\subsection{Why rFID Can Mislead on gFID}
\label{sec:rfid_gap}

A central empirical issue is that tokenizer reconstruction quality (e.g., rFID of $D(E(x))$) is not always predictive of downstream generation quality of $G$~\cite{vavae,vitok}.
Our core view is that rFID evaluates reconstructions only on the \emph{encoder manifold} on a test dataset $\mathcal{M}_E=\{E(x):x\sim\mathcal{D}_{\text{test}}\}$.
However, $G$ must learn a distribution over \emph{valid tokens} and produce high-quality images from them.
This introduces two failure modes that rFID does not directly probe:
{(1) Token-space completeness of the decoder $D$.}
Even if $E$ is fixed, rFID does not separate whether $D$ has the capacity to faithfully reconstruct the target distribution for \emph{arbitrary} valid tokens.
If good reconstructions require finely tuned tokens that occupy a thin set that only $E$ can generate, then $G$ faces a difficult support-learning problem.
{(2) Learnability / geometry of the good-token region.}
If high-quality tokens form multiple disconnected pockets or have high barriers between modes, then a generator $G$ trained on tokens will struggle to model this geometry.
This yields a mismatch between the token distribution that $G$ can learn and the tokens that decode well, leading to degraded gFID even when rFID is strong.
These observations motivate an evaluation framework for the tokenizer that predicts the downstream potential of $G$ \emph{without training $G$}.

\subsubsection{Metrics: AvgIG and MC}
\label{sec:diagnostics}

We introduce two metrics for tokenizer quality that go beyond reconstruction-only metrics such as rFID.
Standard reconstruction metrics evaluate samples only on the encoder manifold; they do not measure whether the token space forms a geometry that a downstream generator can actually learn.
Our metrics address this by probing two complementary properties: \textbf{AvgIG} ensures the decoder is \emph{locally sensitive} to tokens, while \textbf{MC} ensures the latent manifold is \emph{globally connected}.
Both are computed without training the downstream generator $G$.

\noindent \textbf{AvgIG}: This metric quantifies the ``vertical'' steepness of the loss landscape, measuring how much \emph{usable} information the diffusion decoder $D$ can absorb per optimization step towards reconstructing an image from a random latent token.
Intuitively, for tokens to be causally effective, the decoder must be highly sensitive to their values.
If decoding is well-conditioned, optimization should yield a rapid reduction in reconstruction error, indicating that the information in the tokens $z$ is actionable and accessible, a prerequisite for downstream learning.

\textbf{Definition.}
Fix the diffusion decoder $D$ and a target image $x$.
Starting from an initialization $z^0$, we optimize the latent token by gradient descent on MSE:
\begin{align}
z^{t+1} \;&=\; z^t - \eta \nabla_z \tfrac{1}{2}\|D(z^t)-x\|_2^2 \nonumber\\
\mathrm{MSE}_t \;&=\; \tfrac{1}{N}\|D(z^t)-x\|_2^2,
\end{align}
where $N$ is the number of pixels.
We define the per-step information gain (in bits):
\begin{align}
\Delta I_t(x) \;&=\; \frac{N}{2}\log_2\!\left(\frac{\mathrm{MSE}_t}{\mathrm{MSE}_{t+1}}\right) \nonumber\\
\mathrm{AvgIG} \;&=\; \mathbb{E}_{x\sim\mathcal{D}}\big[\Delta I_t(x)\big].
\label{eq:avgig}
\end{align}
AvgIG measures the efficiency of signal recovery.
A low AvgIG implies weak gradients or flat local plateaus where changing $z$ has little effect on $D(z)$, a symptom of token neglect.
High AvgIG indicates that tokens act as stiff control variables, facilitating the learning of a precise generator distribution.
Under a Gaussian residual model, $\Delta I_t$ is proportional to the reduction in residual entropy (bits), serving as an operational measure of ``usable information''.

\noindent \textbf{MC}:
While AvgIG demands local steepness, MC demands ``horizontal'' flatness along the data manifold.
We measure whether \emph{nearby images} map to tokens connected by a \emph{low-loss path} in the tokenizer's token space.
If the token space is fragmented, interpolating between valid tokens will pass through high-loss barriers (artifacts), making the space difficult for $G$ to model.

\textbf{Definition.}
We sample pairs $(x^A,x^B)$ such that $d_{\mathcal{X}}(x^A,x^B)\le \epsilon$, where $d_{\mathcal{X}}$ is a distance measure defined by the image space and $\epsilon$ is a hyperparameter and encode $z^A=E(x^A)$ and $z^B=E(x^B)$.
Let $\gamma:[0,1]\to\mathcal{Z}$ be a path connecting the tokens (linear interpolation), $\gamma(t)=(1-t)z^A+t z^B$.
Instead of pixel-wise reconstruction error, we evaluate the path using a realism loss $\mathcal{L}_{\psi}$ derived from an auxiliary density model $p_{\psi}$ (where higher loss $\mathcal{L}_{\psi}(x)$ indicates lower realism).
We define the worst realism loss along the latent path, $L_{\max}=\max_{t\in[0,1]} \mathcal{L}_{\psi}(D(\gamma(t)))$, and the best reference loss of the endpoints ($L_{\text{ref}} \;=\; \min\!\Big(\mathcal{L}_{\psi}(D(z^A)),\,\mathcal{L}_{\psi}(D(z^B))\Big)$
We define the \textbf{pairwise mode connectivity} score as the ratio:
\begin{equation}
\mathrm{MC}(x^A,x^B)
\;=\;
\frac{L_{\text{ref}}}{L_{\max}+\delta}
\;\in\;[0,1],
\label{eq:mc_ratio_pair}
\end{equation}
where $\delta>0$ ensures numerical stability.
This score approaches $1$ if the path never incurs a realism loss higher than the better endpoint (no barrier), and drops toward $0$ if the path traverses high-loss regions (off-manifold artifacts).
Finally, we report the dataset-level score by sampling different $x^A$ and $x^B$ from the dataset that $d_{\mathcal{X}}(x^A,x^B)\le \epsilon$:
\begin{equation}
\mathrm{MC}=\mathbb{E}\big[\mathrm{MC}(x^A,x^B)\big].
\label{eq:mc_dataset_pair}
\end{equation}
High MC indicates that the valid tokens region is geometrically smooth and connected, ensuring that compositional edits and generative interpolation remain on the natural image manifold.

\subsection{Diffusion Decoder D}
\label{sec:tok_train}

We train the tokenizer $(E,D)$ for reconstruction.
Let $z=E(x)$ and let $x_t$ be the noisy image at diffusion step $t$ under a fixed schedule, with $\epsilon\sim\mathcal{N}(0,I)$.
The diffusion decoder $D$ is a denoiser $\epsilon_D=\epsilon_D(x_t,t;z)$.
trained by the standard conditional denoising loss:
\begin{equation}
\mathcal{L}_{\mathrm{tok}}(E,D)=\mathbb{E}_{x\sim\mathcal{D}}\Big[\|\epsilon-\epsilon_D(x_t,t;z)\|_2^2\Big].
\label{eq:ltok}
\end{equation}
This objective optimizes reconstruction under the encoder manifold $\mathcal{M}_E$; it does not directly enforce that good tokens form a learnable geometry or that tokens are evenly useful, motivating our metrics and the additional regularizers in \modelname.

\subsection{Aligning Tokenizer Training with Completeness and Learnability}
\label{sec:modeling}

Our evaluation suggests two properties of the tokenizer that matter for $G$:
(i) tokens should carry \emph{usable} information that effectively captures the information in the image,
(ii) the set of tokens that decode to realistic images should be low-barrier which is crucial for the learnability of the token space.
\modelname adds objectives that directly target these properties while preserving reconstruction quality.

\subsubsection{Mutual-Information Token Supervision}
\label{sec:mi}
To discourage token neglect, we introduce a recognition model $Q_\phi(z\mid x)$ trained to recover tokens from images decoded by the tokenizer decoder.
We optimize:
\begin{equation}
\mathcal{L}_{\mathrm{MI}}(D,\phi)=-\sum_{k=1}^K \mathbb{E}_{\hat{x}\sim D(\cdot\mid z)}\big[\log Q_\phi(z_k\mid \hat{x})\big],
\label{eq:lmi}
\end{equation}
where $D(\cdot\mid z)$ denotes sampling from the diffusion decoder conditioned on $z$.
AvgIG measures how effectively reconstruction error can be reduced by moving in token space during latent optimization.
If some tokens are ignored by $D$, then changing those coordinates has little effect on $D(z)$, yielding weak gradients and slow error reduction, i.e., low AvgIG.
Mutual-information supervision prevents this failure mode by forcing $D(\cdot\mid z)$ to produce images from which each $z_k$ is predictable.
Operationally, this increases the sensitivity of the decoded image to token perturbations, improving conditioning of latent optimization and thereby increasing AvgIG.

\subsubsection{Swap-Regularized Compositionality}
\label{sec:swap}
To enforce learnable geometry under token composition, we train on swapped tokens.
Given two images $x^A, x^B$, we extract their latent tokens as sequences of tokens, $z^A=(z^A_1, \dots, z^A_K)$ and $z^B=(z^B_1, \dots, z^B_K)$.
We form a swapped tokens $z^{A\leftarrow B}$ by replacing a subset of tokens in $z^A$ with corresponding tokens from $z^B$ according to a swap mask or index set $\mathcal{K}$.
For example, an interleaved swap might yield a mixed sequence:
\begin{equation}
z^{A\leftarrow B} \;=\; (z^A_1,\, z^B_2,\, z^A_3,\, \dots,\, z^B_K).
\end{equation}
We decode a swapped sample $\tilde{x}\sim D(\cdot\mid z^{A\leftarrow B})$ and apply the mutual-information token supervision objective to ensure the decoder respects the mixed composition:
\begin{align}
\mathcal{L}_{\mathrm{MI}}^{\mathrm{swap}}(D,\phi)
\;=\;
-\sum_{k=1}^K 
\mathbb{E}
\big[\log Q_\phi(z_k^{A\leftarrow B}\mid \tilde{x})\big].
\label{eq:lmi_swap}
\end{align}
This objective explicitly expands the set of valid compositions and empirically increases MC.

\begin{table*}[t]
\centering 
\resizebox{\linewidth}{!}{
\tablestyle{5.5pt}{1.05}
\begin{tabular}{l ccc |ccc | cccccc}
    Method & \#Token & Dim. & VQ & rFID$\downarrow$ & AvgIG$\uparrow$ & MC$\uparrow$ & Gen. Model & Type & \#Token & \#Step & gFID$\downarrow$ & IS$\uparrow$ \\
    \shline
    \textbf{2D-grid tokenizers} &&&&&&&&&&&&\\
    MaskBit~\cite{maskbit} & 256 & 12 & \cmark & 1.61 & 0.33 & 0.68 & MaskBit & Mask. & 256 & 256 & 1.52 & 328.6 \\
    MAR~\cite{mar} & 256 & 16 & \xmark & 1.22 & 0.36 & 0.70 & MAR-L & Mask. & 256 & 64 & 1.78 & 296.0 \\
    VQGAN~\cite{VQGAN} & 256 & 16 & \cmark & 7.94 & 0.19 & 0.59 & Tam. Trans. & AR & 256 & 256 & 5.20 & 280.3 \\
    ViT-VQGAN~\cite{vitvqgan} & 1024 & 32 & \cmark & 1.28 & 0.20 & 0.58 & VIM-L & AR & 1024 & 1024 & 4.17 & 175.1  \\
    RQ-VAE~\cite{rqvae} & 256 & 256 & \cmark & 3.20 & 0.18 & 0.57 & RQ-Trans. & AR & 256 & 64 & 3.80 & 323.7 \\
    VAR~\cite{var}  & 680 & 32 & \cmark & 0.90 & 0.29 & 0.66 & VAR-$d$16 & VAR & 680 & 10 & 3.30 & 274.4 \\
    ImageFolder~\cite{imagefolder} & 286 & 32 & \cmark & 0.80 & 0.31 & 0.67 & VAR-$d$16 & VAR & 286 & 10 & 2.60 & 295.0\\
    LlamaGen~\cite{llamagen} & 256 & 8 & \cmark & 2.19 & 0.17 & 0.44 & LlamaGen-L & AR & 256 & 256 & 3.80 & 248.3 \\
    CRT~\cite{CRT} & 256 & 8 & \cmark & 2.36 & 0.32 & 0.66 & LlamaGen-L & AR & 256 & 256 & 2.75 & 265.2 \\
    Causal MAR~\cite{mar} & 256 & 16 & \xmark & 1.22 & 0.29 & 0.64 & MAR-L & AR & 256 & 256 & 4.07 & 232.4 \\
    \hline
    \textbf{1D token-sequence tokenizers} &&&&&&&&&&&&\\
    TiTok-S-128~\cite{titok} & 128 & 16 & \cmark & 1.71 & 0.40 & 0.75 & MaskGIT-L & Mask. & 128 & 64 & 1.97 & 281.8 \\
    TiTok-L-32~\cite{titok} & 32 & 8 & \cmark & 2.21 & 0.27 & 0.62 & MaskGIT-L & Mask. & 32 & 8 & 2.77 & 194.0 \\
    Semanticist~\cite{semanticist} & 256 & 16 & \xmark &  0.78  & 0.38 & 0.73 & $\epsilon$LlamaGen-L & AR & 32 & 32 & 2.57 & 260.9  \\
    FlexTok~\cite{flextok} & 256 & 18 & \cmark & 1.45 & 0.35 & 0.71 & AR & AR & 32 & 32 & 1.86 & - \\
    \hline
    \textbf{\modelname} &  256 & 16 & \xmark & 1.27 & 0.43 & 0.74 & \armodelname & Mask. & 256 & 64 & 1.60 & 297.9 
\end{tabular}
}
\vspace{.5em}
\caption{Reconstruction and generation performance on ImageNet. ``Dim." denotes the dimension of the tokens, and ``\#Step" denotes the number of steps needed for generating the complete image. ``\#Token'' stands for the number of tokens used for image reconstruction (left) and generation (right), respectively.}%
\label{tab:tok_comp}
\end{table*}

\subsubsection{Manifold Constraint via Adversarial Flow Model (AFM)}
\label{sec:afm}

Token swaps produce \emph{unpaired} tokens $z^{A\leftarrow B}$ that often decode to off-manifold artifacts, creating connectivity barriers between valid regions.
To constrain these mixed tokens to the natural image manifold, we adopt the Adversarial Flow Model (AFM)~\cite{lin2025adversarial} as a realism prior.
We employ a time-conditioned discriminator $D_\psi(x_t, t)$ trained via a relativistic objective to distinguish between real data flows and generated flows.
The decoder $D$ is optimized to fool the discriminator while maintaining deterministic transport stability via the following generator objective:
\begin{align}
\mathcal{L}_{\mathrm{AFM}}(D) &=
\underbrace{\mathbb{E}\big[\text{softplus}(D_\psi(x_t, t) - D_\psi(\tilde{x}_t, t))\big]}_{\text{Manifold Constraint (Realism)}}\\
&+\lambda_{\mathrm{OT}}\underbrace{\mathbb{E}\big[\|\tilde{x} - x\|_2^2\big]}_{\text{Stability (OT)}},
\label{eq:lafm}
\end{align}
where $\tilde{x}$ are decoded images (reconstructions of swapped tokens or non-swapped tokens) diffused to time $t$, and $x$ are real targets (applied only when available).
The adversarial term acts as a powerful manifold prior for swapped tokens $z^{A\leftarrow B}$: it penalizes any decode $\tilde{x}$ that lies in low-density artifact regions, effectively pushing the decoder to map mixed tokens onto the natural image manifold defined by $D_\psi$.
Simultaneously, the Optimal Transport (OT) term ensures the mapping remains straight and deterministic where targets exist.
By forcing swapped decodes to be realistic, AFM eliminates the high-error barriers that typically separate valid tokens, thereby smoothing the latent geometry and directly increasing Mode Connectivity (MC).

\modelname is trained with a combined loss of the mutual information loss, the swapped mutual information loss, and the AFM loss.

\begin{table*}[t]
  \vspace{-2mm}
  \centering
  \setlength{\tabcolsep}{4.5pt}
  \renewcommand{\arraystretch}{1.12}
  \resizebox{\textwidth}{!}{
  \begin{tabular}{l|ccc|c|ccc|c|ccc|c}
  \hline
  \multirow{2}{*}{\textbf{Tokenizer}} &
  \multicolumn{4}{c|}{\textbf{Remote Sensing}} &
  \multicolumn{4}{c|}{\textbf{Medical}} &
  \multicolumn{4}{c}{\textbf{Text OCR}} \\
  \cline{2-13}
  & \textbf{rFID}$\downarrow$ & \textbf{AvgIG}$\uparrow$ & \textbf{MC}$\uparrow$ & \textbf{Task}$\uparrow$
  & \textbf{rFID}$\downarrow$ & \textbf{AvgIG}$\uparrow$ & \textbf{MC}$\uparrow$ & \textbf{Task}$\uparrow$
  & \textbf{rFID}$\downarrow$ & \textbf{AvgIG}$\uparrow$ & \textbf{MC}$\uparrow$ & \textbf{Task}$\uparrow$ \\
  \hline
  \textbf{2D-grid tokenizers} &&&&&&&&&&&&\\
  \hline
  RAE~\cite{rae}                 & 146.1 & 0.130 & 0.47 & 75.9 & 160.7 & 0.124 & 0.46 & 59.3 & 131.5 & 0.121 & 0.49 & 76.0 \\
  VQGAN~\cite{VQGAN}             & 162.9 & 0.149 & 0.54 & 83.4 & 161.3 & 0.135 & 0.48 & 63.5 & 141.4 & 0.138 & 0.51 & 79.4 \\
  MAE~\cite{MAE}                 & 135.3 & 0.147 & 0.54 & 85.1 & 155.4 & 0.141 & 0.54 & 62.0 & 134.3 & 0.137 & 0.54 & 80.4 \\
  SDXL VAE~\cite{podell2023sdxl} & 145.9 & 0.131 & 0.48 & 79.7 & 158.2 & 0.128 & 0.46 & 62.3 & 137.2 & 0.143 & 0.53 & 79.2 \\
  \hline
  \textbf{1D token-sequence tokenizers} &&&&&&&&&&&&\\
  \hline
  TiTok-L-32~\cite{titok} & 157.0 & 0.159 & 0.64 & 86.5 & 174.1 & 0.131 & 0.60 & 63.5 & 143.6 & 0.160 & 0.66 & 87.6 \\
  Semanticist~\cite{semanticist} & 159.9 & 0.144 & 0.64 & 84.2 & 171.6 & 0.151 & 0.66 & 67.0 & 159.0 & 0.144 & 0.66 & 86.3 \\
  FlexTok~\cite{flextok}         & 154.8 & 0.154 & 0.63 & 86.7 & 160.8 & 0.162 & 0.66 & 71.6 & 155.3 & 0.163 & 0.65 & 87.8 \\
  \hline
  \textbf{\modelname (ours)}   & 168.1 & 0.157 & 0.66 & 88.7 & 153.7 & 0.159 & 0.69 & 74.9 & 141.5 & 0.167 & 0.65 & 91.9 \\
  \hline
  Original Image &  -    & -     &   -  &  89.5 & -     & -    & -    & 76.5 &  -    &  -    &  -    & 92.5 \\
  \hline
  \multicolumn{13}{l}{\textbf{Correlation with Task Utility (Pearson $r$ across tokenizers; higher is better)}} \\
  \hline
  $r(\text{rFID},\text{Task})$   & \multicolumn{4}{c|}{0.48} & \multicolumn{4}{c|}{-0.18} & \multicolumn{4}{c}{0.62} \\
  $r(\text{AvgIG},\text{Task})$  & \multicolumn{4}{c|}{0.93} & \multicolumn{4}{c|}{0.91} & \multicolumn{4}{c}{0.92} \\
  $r(\text{MC},\text{Task})$     & \multicolumn{4}{c|}{0.87} & \multicolumn{4}{c|}{0.86} & \multicolumn{4}{c}{0.94} \\
  \hline
  \end{tabular}
  }
  \vspace{1mm}
  \caption{\textbf{Cross-domain tokenizer diagnostics and task utility.} For each tokenizer, we report rFID and our two diagnostics (AvgIG, local MC) computed on $(E,D)$, plus downstream task performance on \emph{decoded} images (Remote Sensing: segmentation accuracy/mIoU; Medical: Dice/mIoU; OCR: accuracy). The bottom block reports Pearson correlations across tokenizers, showing AvgIG/MC correlate more strongly with task utility than rFID.}
  \label{fig:tokenizer_task_utility_corr}
  \vspace{-2mm}
  \end{table*}

\subsection{Downstream Training of $G$: MaskGIT with a Diffusion Token Head}
\label{sec:G_train}

After training the tokenizer $(E,D)$, we train a separate generator $G$ on the tokenizer token space.
Given $x\sim\mathcal{D}$, the encoder produces a continuous token sequence
\begin{equation}
z = E(x) = (z_1,\dots,z_K),\qquad z_k\in\mathbb{R}^{d}.
\end{equation}

\paragraph{MaskGIT backbone.}
We adopt MaskGIT to model masked token prediction via a bidirectional transformer $G_\theta$.
For a random mask $m\in\{0,1\}^K$, the backbone takes the visible tokens and mask indicators and outputs per-position context features: $h_k = G_\theta(z_{\neg m}, m, c)_k$.
$c$ is an optional conditioning signal (e.g., class label or text embedding).

\paragraph{Continuous tokens: diffusion regression head.}
Since $z_k$ are continuous, we replace the discrete softmax head with a small diffusion MLP head $M_\psi$ that generates each masked token from noise conditioned on $h_k$.
Using a standard forward noising process $z_k^t = \alpha_t z_k^0 + \sigma_t \epsilon$, $\epsilon\sim\mathcal{N}(0,I)$,
we train $(G_\theta,M_\psi)$ to predict $\epsilon$:
\begin{equation}
\mathcal{L}_{G}(\theta,\psi)
=
\mathbb{E}_{z=E(x),\, m,\, t}
\left[
\sum_{k:\, m_k=1}
\left\|
\epsilon - M_\psi(z_k^t,\, h_k,\, t)
\right\|_2^2
\right].
\label{eq:maskgit_diff_loss}
\end{equation}
As this generator predicts $\epsilon$ for each masked token, we term it as \armodelname.
At inference, \armodelname iteratively refines a partially masked sequence for $S$ steps: it masks uncertain positions, samples their token values using the diffusion head conditioned on $h_k$, and repeats until all tokens are filled, yielding $\hat{z}$.
An image is then obtained by decoding with the tokenizer diffusion decoder: $\hat{x}\sim D(\cdot\mid \hat{z})$.

\section{Experiments}
\label{sec:experiments}

Our experiments validate two claims: (i) \modelname improves tokenizer quality and downstream generation, and (ii) our two metrics, AvgIG and MC, predict downstream generation and task utility more reliably than rFID.
Experiment details are presented in the appendix.

\subsection{Comparisons with state-of-the-art tokenizers}
\label{sec:exp_sota}

We compare \modelname against a suite of state-of-the-art tokenizers spanning both (i) 2D grid tokenization methods and (ii) prior 1D token sequence approaches.
For each tokenizer, we report rFID and our generator-free diagnostics (AvgIG and MC computed on $(E,D)$).
The gFID of the downstream generator trained on the token space of the tokenizer is also reported.
The results are shown in Table~\ref{tab:tok_comp}.
\modelname yields a strong rFID and notably better AvgIG and MC than the state-of-the-art tokenizers.
Additionally, \armodelname trained on \modelname tokens achieves the best gFID among the generator variants. 
This improvements indicates that \modelname combined with \armodelname can be considered as a strong tokenizer and a generator on ImageNet class-conditional generation.
This improvement in gFID is not explained by rFID alone: we observe cases where tokenizers with similar (or even better) rFID underperform in gFID, for example, Semanticist~\cite{semanticist} and FlexTok~\cite{flextok}, consistent with the rFID--gFID mismatch reported in prior works~\cite{vavae,vitok}.
In the same time, we can observe that the AvgIG and MC of \modelname outperforms the tokenizers with similar rFIDs, which means that AvgIG and MC are more predictive of gFID than rFID.

\subsection{Ablations}
\label{sec:exp_ablation}

We ablate each of the three proposed training components in \modelname:
(i) mutual-information token supervision $\mathcal{L}_{\mathrm{MI}}$,
(ii) swap-regularized recoverability $\mathcal{L}^{\mathrm{swap}}_{\mathrm{MI}}$,
and (iii) the Adversarial Flow Model manifold constraint $\mathcal{L}_{\mathrm{AFM}}$.
For each ablation, we report AvgIG, local MC, rFID, and gFID using the same training budget.

\begin{figure*}[t]
  \centering
  \includegraphics[width=.9\linewidth]{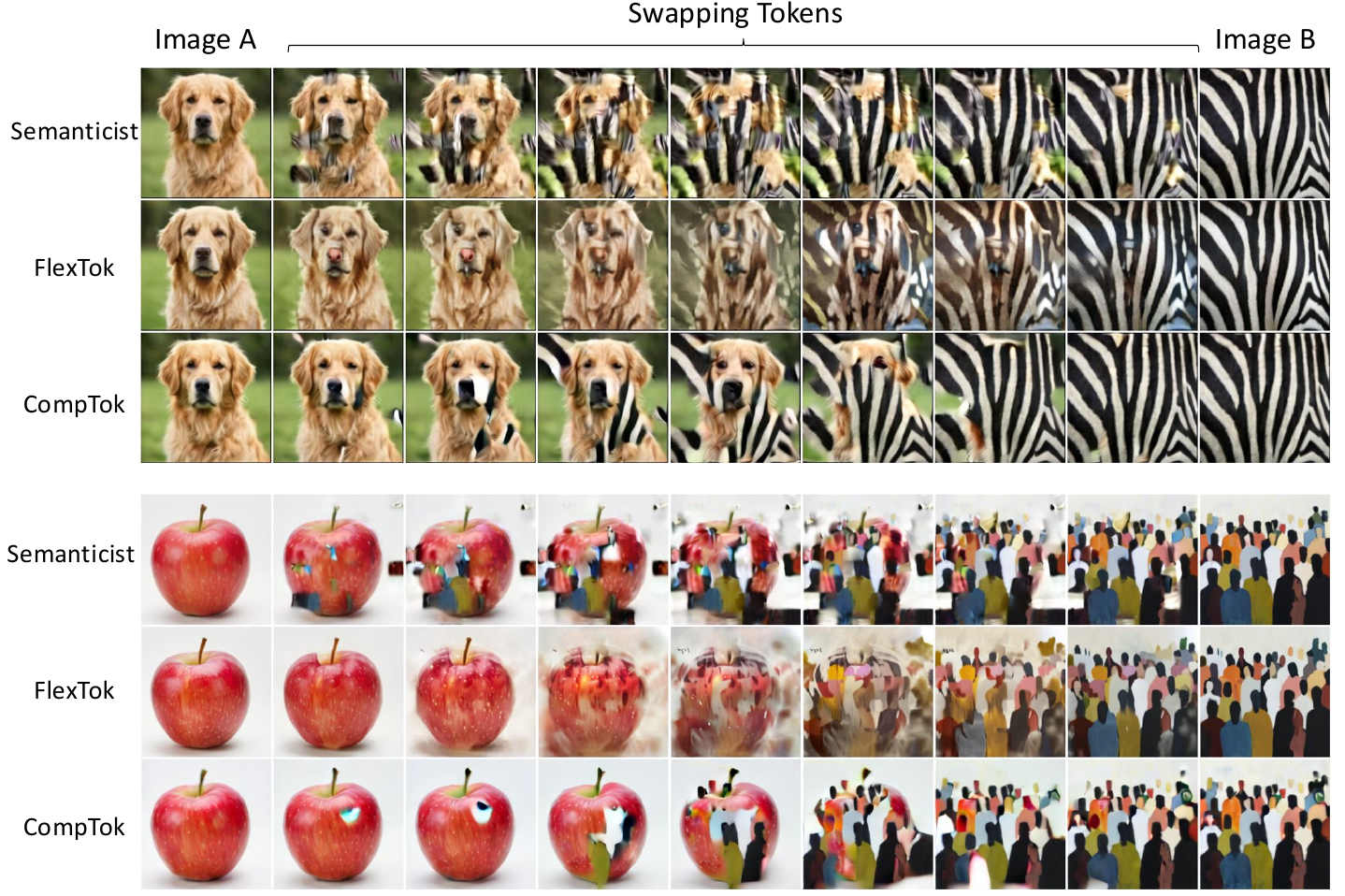}
  \caption{Token swapping as a compositionality probe. For each row, we progressively swap an increasing subset of tokens from Image B into Image A. \modelname preserves coherent structure while transferring semantics, e.g., zebra stripes onto a dog; crowd appearance onto an apple, whereas baselines exhibit mixing artifacts and off-manifold blends.}
  \label{fig:qualitative_results}
\end{figure*}

\begin{wraptable}{r}{0.5\textwidth}
  \vspace{-1mm}
  \centering
  \setlength{\tabcolsep}{5pt}
  \renewcommand{\arraystretch}{1.12}
  \begin{tabular}{l|cccc}
  \hline
  \textbf{Variant} & \textbf{rFID}$\downarrow$ & \textbf{AvgIG}$\uparrow$ & \textbf{MC}$\uparrow$ & \textbf{gFID}$\downarrow$ \\
  \hline
  Full \modelname & 1.27 & 0.43 & 0.74 & 1.60 \\
  w/o $\mathcal{L}_{\mathrm{MI}}$ & 1.89 & 0.21 & 0.71 & 2.45 \\
  w/o $\mathcal{L}^{\mathrm{swap}}_{\mathrm{MI}}$ & 1.42 & 0.29 & 0.62 & 2.68 \\
  w/o $\mathcal{L}_{\mathrm{AFM}}$ & 1.93 & 0.40 & 0.67 & 2.71 \\
  \hline
  \end{tabular}
  \vspace{1mm}
  \caption{\textbf{Ablations of \modelname on ImageNet.} We ablate each proposed loss term and report rFID, our metrics (AvgIG, MC), and downstream gFID after training the same \armodelname generator $G$ on the resulting token space.}
  \label{tab:ablation_imagenet_gfid}
\end{wraptable}

The results are shown in Table~\ref{tab:ablation_imagenet_gfid}.
Removing $\mathcal{L}_{\mathrm{MI}}$ significantly reduces AvgIG: if tokens are not required to be recoverable from decoded samples, some token coordinates become weakly coupled to the decoder output, yielding poor conditioning and lower information gain during latent optimization.
In contrast, rFID does not change that drastically, reflecting that encoder-manifold reconstruction alone does not enforce token usefulness.
$\mathcal{L}_{\mathrm{MI}}^{\mathrm{swap}}$ matters for MC as dropping it would increase the loss barrier between valid tokens.
Removing $\mathcal{L}_{\mathrm{AFM}}$ also leads to a drop in MC.
This aligns with the intuition that AFM provides an adversarial manifold constraint~\cite{lin2025adversarial} that penalizes off-manifold generations, thereby smoothing the decoded landscape around swapped-token regions and reducing barriers.
Again, rFID alone does not reliably predict these effects.

\subsection{Correlations with Downstream Task Utility}
\label{sec:exp_taskcorr}
A useful tokenizer should preserve relevant semantics under encode--decode, not just pixel-level similarity.
Reconstruction metrics such as rFID can be dominated by low-level artifacts and may fail to reflect whether decoded images remain informative for downstream perception tasks.
We therefore test whether AvgIG and MC correlate with task performance more strongly than rFID on three domains not only require pixel-level fidelity but also semantic accuracy.
The three domains are (i) remote sensing, (ii) medical image understanding, and (iii) text OCR.
We use datasets from DeepGlobe~\cite{demir2018deepglobe}, ChestX-ray14~\cite{irvin2019chexpert}, and TextOCR~\cite{singh2021textocr} respectively for the three domains.
For each tokenizer, we run an encode--decode pipeline using $(E,D)$ and evaluate a fixed downstream task model on the decoded images using the task relevant metrics.
We only used a subset of the full test set for each of the tasks to speed up the evaluation process.
Results are presented in Table~\ref{fig:tokenizer_task_utility_corr}.
Across tokenizers, AvgIG and MC correlate more strongly with task performance than rFID.
This supports the interpretation that AvgIG/MC are sensitive to whether token spaces preserve \emph{usable semantic information}: high AvgIG indicates that informative directions in latent token space remain accessible and stable, while high MC indicates that small semantic variations do not map to disconnected or brittle latent regions that would distort fine structures critical for segmentation and OCR.
These results position AvgIG and MC as practical metrics for selecting tokenizers when the end goal is semantic utility, especially in domains where FID-style reconstruction metrics are known to be imperfect proxies.

\subsection{Qualitative Results}
\label{sec:exp_qual}

Figure~\ref{fig:qualitative_results} is a qualitative readout of the same two properties targeted by AvgIG and MC. 
Clean, stable partial swaps imply that many tokens have actionable influence rather than being ignored, which corresponds to higher AvgIG.
Meanwhile, the absence of ``broken" intermediate states indicates that mixed tokens lie in a connected low-barrier region under the realism criterion, aligning with higher MC.
When baselines smear or collapse during partial swaps, \modelname is able to preserve a coherent scene layout while cleanly transferring attributes from Image B onto Image A, yielding realistic intermediate compositions and stable binding.

\section{Conclusion}
\label{sec:conclusion}

We presented \modelname, a diffusion-decoder visual tokenizer designed for downstream token-space generators, and introduced two metrics, AvgIG and MC, that better characterize token-space usefulness and learnability than the reconstruction-only metric of rFID.
Empirically, AvgIG and MC correlate more strongly with downstream gFID and with semantic task utility across remote sensing, medical segmentation, and OCR settings.
Guided by these metrics, \modelname combines mutual-information token supervision, token swapping-based compositional training, and an adversarial flow manifold constraint to reduce token neglect and keep mixed tokens on the real image manifold, yielding improved generation quality by providing a easier to learn latent space for the generator.
Overall, our results suggest that evaluating and designing tokenizers should prioritize the geometry and usability of the induced token space, as captured by AvgIG and MC, rather than reconstruction fidelity alone.

\bibliographystyle{plainnat}
\bibliography{main}

\clearpage

\beginappendix

\section{Implementation Details}
\paragraph{\modelname autoencoder.}
The encoder of \modelname is a standard ViT-B/16~\cite{vit}, except for additional concept tokens and causal attention masks applied to them.
Before being fed to the decoder, the concept tokens are normalized by their own mean and variance following~\cite{RCG}.
The decoder is a DiT~\cite{dit} with a patch size of 2. 
We take DiT-L as default following~\citet{semanticist}. 
The decoder operates on the latent space of a publicly available KL-16 VAE provided by~\cite{mar} to reduce computation cost. 
The VAE is frozen during training, and both the encoder and decoder are trained from scratch.
To enforce the quality of the learned concept tokens and stabilize training, we apply REPA~\cite{repa} with DINOv2-B~\cite{dinov2} as a regularizer to the 8th layer of the DiT decoder.
We do not apply weighting on the loss terms of $\mathcal{L}_{\mathrm{tok}}$, $\mathcal{L}_{\mathrm{MI}}$, $\mathcal{L}_{\mathrm{MI}}^{\mathrm{swap}}$, and $\mathcal{L}_{\mathrm{AFM}}$, as we found this yield the best performance.

\paragraph{Autoregressive image modeling.}
We validate the effectiveness of \modelname autoencoder by training autoregressive image generation models using MaskGIT~\cite{maskgit} combined with diffusion loss~\cite{mar}. 
The input sequence is pre-pended with a \texttt{[CLS]} token for class conditioning, which is randomly dropped out with probability 0.1 during training for classifier-free guidance. 
At inference time, we use a CFG schedule following~\cite{mar,muse}, and do not apply temperature sampling. 
Note that the implementation can be general. 
While our preliminary validation demonstrates promising results, we anticipate better configurations in future work.

\paragraph{AvgIG.} We fix the decoder and, for each evaluation image, initialize $z^0$ from a random latent of the appropriate shape. We run 100 steps of gradient descent on $\tfrac12|D(z)-x|_2^2$ with step size $\eta=0.001$. 
We record $\mathrm{MSE}_t$ and compute $\Delta I_t(x)=\tfrac{N}{2}\log_2\big(\mathrm{MSE}_t/\mathrm{MSE}_{t+1}\big)$ at each step, then average over steps and images. We use a small numerical floor on $\mathrm{MSE}$ to avoid division by zero.

\paragraph{MC.} We set $\mathcal{L}*\psi(x)$ to be the realism loss induced by the adversarial flow model~\cite{lin2025adversarial} trained via $\mathcal{L}*{\mathrm{AFM}}$. On ImageNet, we construct ``nearby" pairs using CLIP image embeddings: for each anchor image, we select (i) the nearest neighbor within the same class and (ii) the nearest neighbor from a different class. 
We report the average MC over all evaluated pairs.

\section{Additional Qualitative Results on Token Swapping}

Figure~\ref{fig:qualitative_results_1},\ref{fig:qualitative_results_2},\ref{fig:qualitative_results_3},\ref{fig:qualitative_results_4},\ref{fig:qualitative_results_5} presents more qualitative results on swapping tokens between images.
From the top row the bottom row of these figures, we present the results of swapping tokens on MaskBit~\cite{maskbit}, a 2D tokenizer, Semanticist~\cite{semanticist}, FlexTok~\cite{flextok}, and \modelname.
We can observe the improved ability of mixing semantics of \modelname comparing to other models.

\begin{figure*}[t]
  \centering
  \includegraphics[width=.9\linewidth]{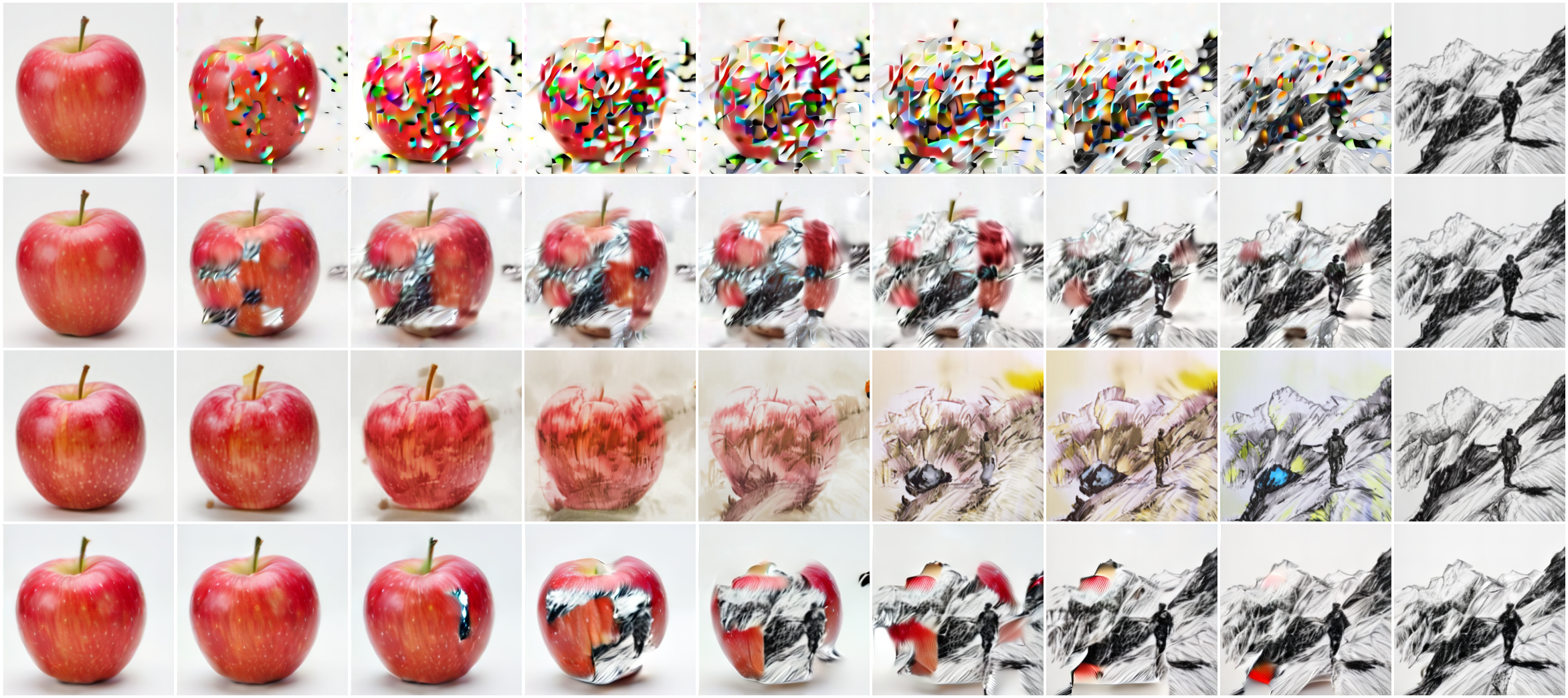}
  \caption{}
  \label{fig:qualitative_results_1}
\end{figure*}

\begin{figure*}[t]
  \centering
  \includegraphics[width=.9\linewidth]{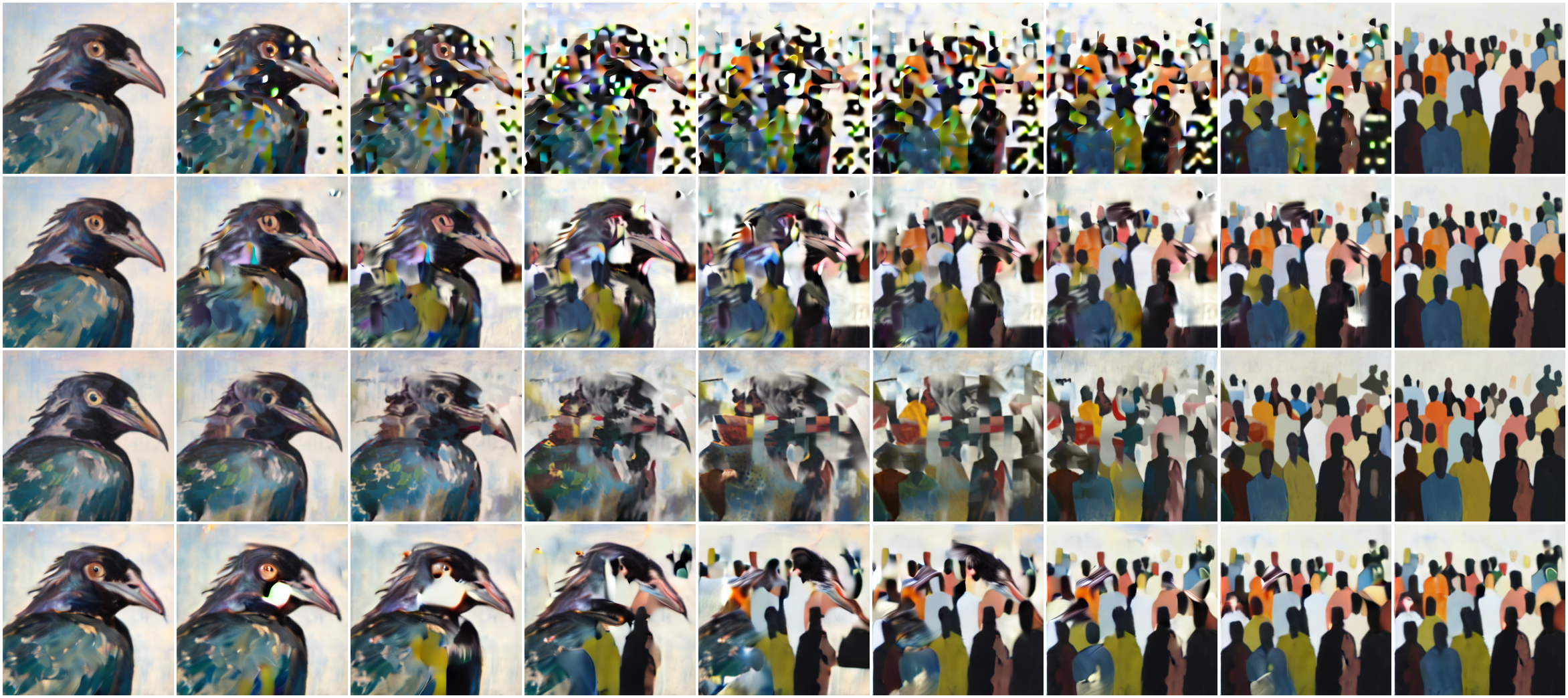}
  \caption{}
  \label{fig:qualitative_results_2}
\end{figure*}

\begin{figure*}[t]
  \centering
  \includegraphics[width=.9\linewidth]{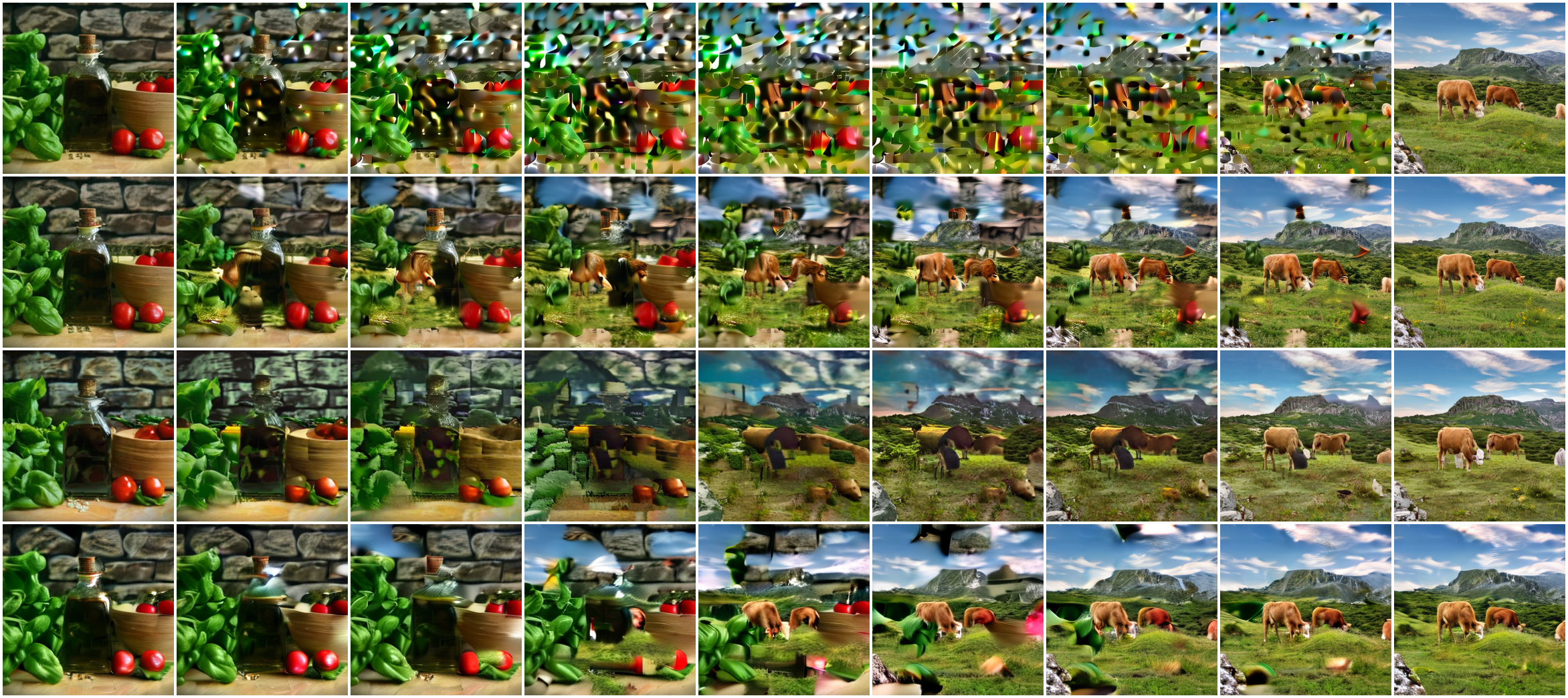}
  \caption{}
  \label{fig:qualitative_results_3}
\end{figure*}

\begin{figure*}[t]
  \centering
  \includegraphics[width=.9\linewidth]{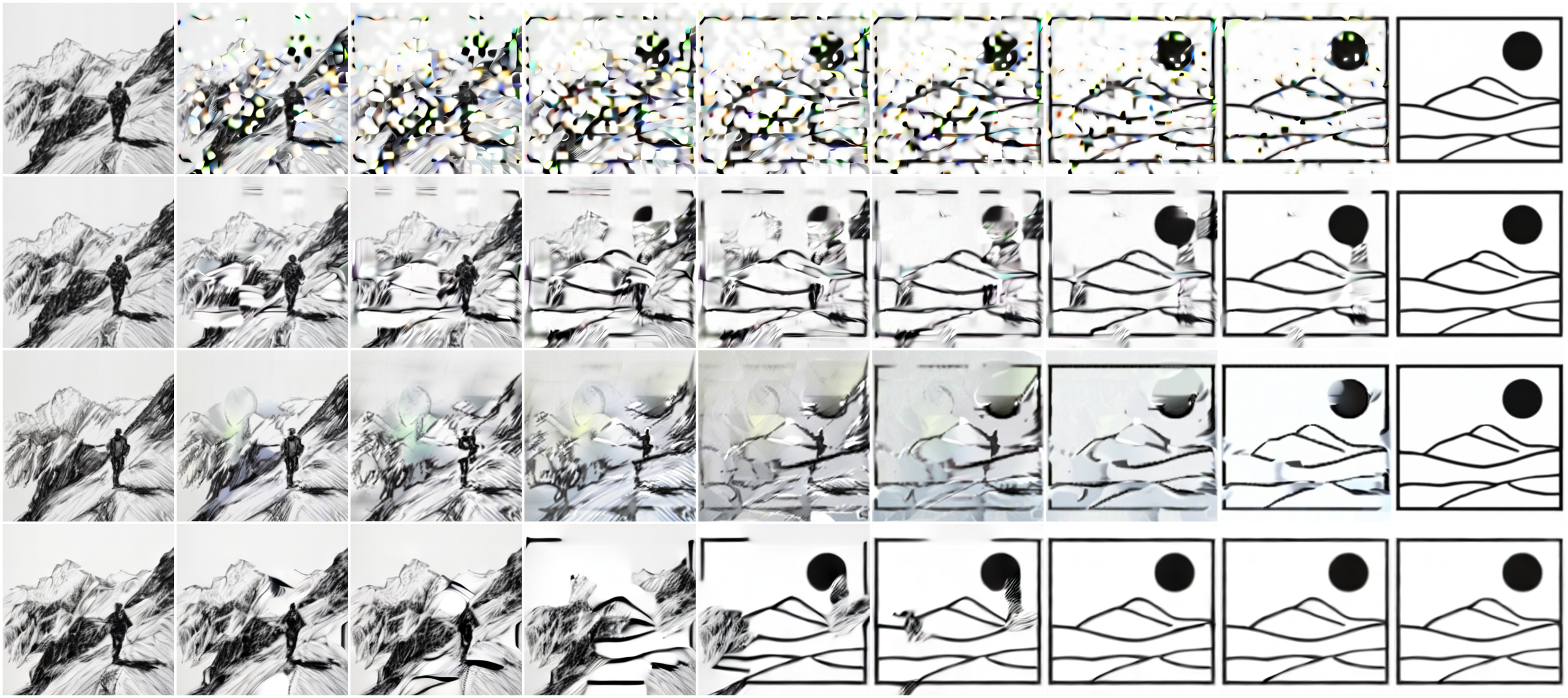}
  \caption{}
  \label{fig:qualitative_results_4}
\end{figure*}

\begin{figure*}[t]
  \centering
  \includegraphics[width=.9\linewidth]{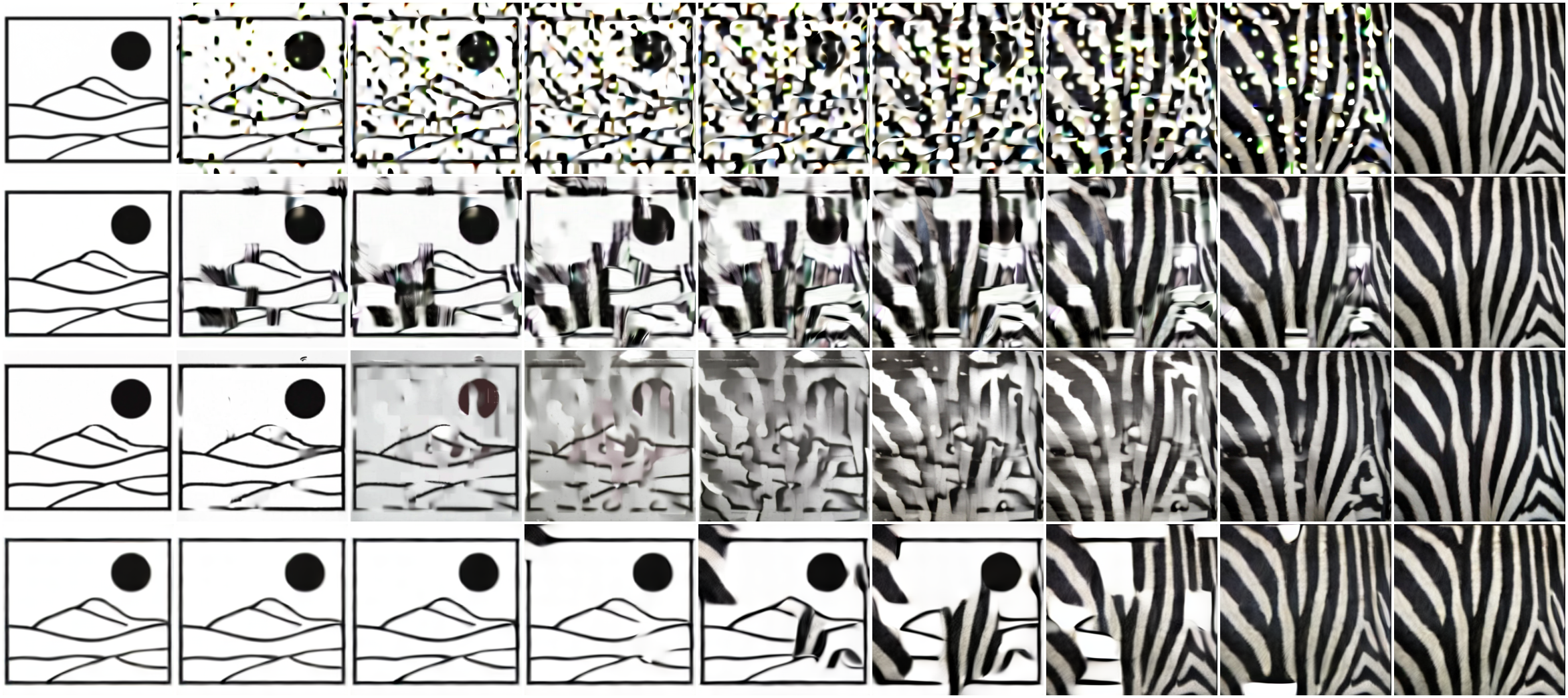}
  \caption{}
  \label{fig:qualitative_results_5}
\end{figure*}

\clearpage

\section{Operational Definitions and Guarantees for AvgIG and Local MC}
\label{sec:appendix_guarantees}

This appendix provides a formal treatment of the two diagnostics introduced in the main text---\textbf{AvgIG} (Average Information Gain) and \textbf{MC} (Mode Connectivity). We provide precise definitions, prove their theoretical interpretations, and formally demonstrate how the proposed training objectives ($\mathcal{L}_{\text{MI}}$, $\mathcal{L}_{\text{swap}}$, and $\mathcal{L}_{\text{AFM}}$) act to optimize these metrics.

\subsection{Setup and Notation}
\label{sec:app_setup}

Let $(E,D)$ be a tokenizer where $E:\mathcal{X}\to\mathcal{Z}$ maps images to a token space $\mathcal{Z} \subset \mathbb{R}^{K \times d}$ (represented as continuous embeddings before quantization or strictly continuous tokens), and $D:\mathcal{Z}\to\mathcal{X}$ is a generator/decoder.
We assume the decoder output $D(z)$ represents the mean of a probabilistic decoding distribution or a deterministic mapping.

\textbf{Realism Loss.} Let $\mathcal{L}_\psi:\mathcal{X}\to\mathbb{R}_{\ge 0}$ be a differentiable realism loss function derived from an auxiliary density model $p_\psi(x)$, such that $\mathcal{L}_\psi(x) = -\log p_\psi(x) + C$. Lower values indicate higher realism.

\textbf{Latent Space Optimization (LSO).} Given a target image $x$, we define the optimization trajectory $\mathcal{T}_x = \{z_t\}_{t=0}^T$ generated by gradient descent on the Mean Squared Error (MSE) reconstruction loss:
\begin{equation}
z_0 = E(x), \quad z_{t+1} = z_t - \eta \nabla_z \mathcal{L}_{\text{rec}}(D(z_t), x), \quad \text{where } \mathcal{L}_{\text{rec}}(x, y) = \frac{1}{2}\|x-y\|_2^2.
\end{equation}

\textbf{Nearby Pairs.} Let $\mathcal{P}_\epsilon = \{(x^A, x^B) \in \mathcal{X}^2 : \|x^A - x^B\|_2 \le \epsilon\}$ be the set of $\epsilon$-nearby image pairs.

\subsection{Operational Definitions}
\label{sec:app_defs}

\begin{definition}[AvgIG: Usable Information Gain]
\label{def:avgig}
For a target $x$ and optimization trajectory $\{z_t\}$, the per-step information gain (in bits) is defined as:
\begin{equation}
\Delta I_t(x) \;=\; \frac{N}{2} \log_2 \left( \frac{\mathcal{L}_{\text{rec}}(D(z_t), x)}{\mathcal{L}_{\text{rec}}(D(z_{t+1}), x)} \right),
\end{equation}
where $N$ is the data dimensionality (number of pixels).
The system-level \textbf{AvgIG} is the expected gain over the data distribution and optimization steps:
\begin{equation}
\mathrm{AvgIG} \;=\; \mathbb{E}_{x \sim p_{\text{data}}}\left[ \frac{1}{T} \sum_{t=0}^{T-1} \Delta I_t(x) \right].
\end{equation}
\end{definition}

\begin{definition}[Local MC: Pairwise Connectivity Ratio]
\label{def:mc}
For a pair $(x^A, x^B) \in \mathcal{P}_\epsilon$, let $z^A, z^B$ be their tokens. Define the linear interpolation path $\gamma(u) = (1-u)z^A + u z^B$ for $u \in [0,1]$.
We define the \emph{Reference Loss} ($L_{\text{ref}}$) and the \emph{Path Maximum Loss} ($L_{\max}$) using the realism function $\mathcal{L}_\psi$:
\begin{align}
L_{\text{ref}}(x^A, x^B) \;&=\; \min \left( \mathcal{L}_\psi(D(z^A)),\, \mathcal{L}_\psi(D(z^B)) \right), \\
L_{\max}(x^A, x^B) \;&=\; \max_{u \in [0,1]} \mathcal{L}_\psi(D(\gamma(u))).
\end{align}
The Pairwise Mode Connectivity score is the ratio:
\begin{equation}
\mathrm{MC}(x^A, x^B) \;=\; \frac{L_{\text{ref}}}{L_{\max} + \delta}, \quad \delta > 0.
\end{equation}
The system-level \textbf{MC} is $\mathbb{E}_{(x^A, x^B) \sim \mathcal{P}_\epsilon} [\mathrm{MC}(x^A, x^B)]$.
\end{definition}

\subsection{Theoretical Guarantees for Diagnostics}

Here we provide formal justification for why these metrics capture ``optimization condition'' and ``manifold smoothness''.

\subsubsection{AvgIG Represents Likelihood Improvement}

\begin{lemma}[AvgIG $\equiv$ Likelihood Gradient]
\label{lemma:avgig_likelihood}
Assume a Gaussian probabilistic decoder $p(x|z) = \mathcal{N}(x; D(z), \sigma^2 I)$. Then, the term $\Delta I_t(x)$ is exactly equal to the increase in log-likelihood of the target $x$ achieved by the update step $z_t \to z_{t+1}$ (shifted by a constant).
\end{lemma}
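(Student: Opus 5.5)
The plan is to compute both sides of the claimed identity explicitly under the stated Gaussian decoder $p(x\mid z)=\mathcal{N}(x;D(z),\sigma^2 I)$, keeping $\sigma^2$ fixed across the step $z_t\to z_{t+1}$. I then want to identify exactly what "equal up to a constant shift" can mean for a fixed $\sigma^2$. Throughout I write $r_t=D(z_t)-x$ and $\mathrm{MSE}_t=\tfrac1N\|r_t\|_2^2$, so that $\mathcal{L}_{\text{rec}}(D(z_t),x)=\tfrac N2\mathrm{MSE}_t$. The constant factor $\tfrac12$ cancels in the ratio of Definition~\ref{def:avgig}.

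\textbf{Step 1: the likelihood side.} I would expand the Gaussian density:
\begin{equation}
\log p(x\mid z_t)\;=\;-\frac{N}{2\sigma^2}\,\mathrm{MSE}_t\;-\;\frac N2\log(2\pi\sigma^2).
\end{equation}
The normalizer $-\tfrac N2\log(2\pi\sigma^2)$ does not depend on $z$. This is the "constant" of the lemma, and it cancels in the per-step increase
\begin{equation}
\Delta\ell_t \;=\; \log p(x\mid z_{t+1})-\log p(x\mid z_t)\;=\;\frac{N}{2\sigma^2}\big(\mathrm{MSE}_t-\mathrm{MSE}_{t+1}\big).
\end{equation}

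\textbf{Step 2: the information-gain side.} By definition, $\Delta I_t(x)\ln 2=\tfrac N2\big(\ln\mathrm{MSE}_t-\ln\mathrm{MSE}_{t+1}\big)$. The two sides are therefore a difference of $\mathrm{MSE}$ and a difference of $\ln\mathrm{MSE}$. To relate them exactly, I apply the mean value theorem to $\ln$ on the segment between $\mathrm{MSE}_{t+1}$ and $\mathrm{MSE}_t$. This gives a unique $\xi_t$ in that interval, namely the logarithmic mean of the two, such that
\begin{equation}
\mathrm{MSE}_t-\mathrm{MSE}_{t+1}=\xi_t\big(\ln\mathrm{MSE}_t-\ln\mathrm{MSE}_{t+1}\big).
\end{equation}
Combining with Step 1 yields the exact identity $\Delta\ell_t=\tfrac{\xi_t}{\sigma^2}\,\Delta I_t(x)\ln 2$, with no remainder term. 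When the decoder variance is calibrated to the residual scale of the step, $\sigma^2=\xi_t$, this gives $\Delta I_t(x)=\Delta\ell_t/\ln 2$ exactly. In words, $\Delta I_t$ is the log-likelihood gain in bits, and the $z$-independent normalizer is the constant by which $\log p$ is shifted.

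\textbf{Main obstacle.} The hardest part is justifying that calibration. For an arbitrary fixed $\sigma^2$, the two quantities agree only up to the multiplicative factor $\xi_t/\sigma^2$, not up to an additive constant. I would therefore state explicitly that the lemma's "exactly" is to be read with $\sigma^2$ fixed at the residual scale $\xi_t$ of the step in question, which is the noise level of the Gaussian residual model. As a consistency check I would add a remark: if $\sigma^2$ is held at any other value, the identity still holds with the factor $\xi_t/\sigma^2$, and that factor tends to $\mathrm{MSE}_t/\sigma^2$ as the step size $\eta\to0$. Under either reading, the ordering and sign of AvgIG coincide with those of the likelihood improvement.
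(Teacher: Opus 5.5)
Your computation is correct, and it is sharper than the paper's own argument. The paper computes the same fixed-$\sigma^2$ increment $\Delta\mathrm{LL}=\frac{1}{2\sigma^2\ln 2}\big(\|D(z_t)-x\|_2^2-\|D(z_{t+1})-x\|_2^2\big)$. It then only invokes the first-order approximation $\log(a/b)\approx(a-b)/b$ and calls AvgIG a ``scale-invariant formulation'' of likelihood ascent, so it never establishes an exact equality. Your mean-value identity $\Delta\ell_t=\frac{\xi_t}{\sigma^2}\,\Delta I_t(x)\ln 2$ makes the multiplicative discrepancy explicit, and it agrees with the paper's approximation to first order as $\eta\to 0$. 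You are also right that for a fixed $\sigma^2$ the lemma is false when read as equality of increments up to an additive constant.

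The weak point is your rescue. Setting $\sigma^2=\xi_t$, the logarithmic mean of $\mathrm{MSE}_t$ and $\mathrm{MSE}_{t+1}$, is tautological. The variance is chosen after the fact, depends on both endpoints of the step, and changes from step to step. So there is no single likelihood function of $z$ whose increments are the $\Delta I_t$, and calling $\xi_t$ ``the noise level of the Gaussian residual model'' does not justify it. The canonical exact reading is the profile likelihood. Maximizing over $\sigma^2$ at each $z$ gives $\hat\sigma^2(z)=\mathrm{MSE}(z)$ and
\[
\max_{\sigma^2}\log_2 p(x\mid z)\;=\;-\tfrac{N}{2}\log_2\mathrm{MSE}(z)\;-\;\tfrac{N}{2}\log_2(2\pi e).
\]
Under this reading $\Delta I_t(x)$ is exactly the increment of the profile log-likelihood in bits. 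The shift $-\tfrac{N}{2}\log_2(2\pi e)$ is genuinely independent of $z$, and the per-step gains telescope to $\tfrac{N}{2}\log_2(\mathrm{MSE}_0/\mathrm{MSE}_T)$. This is also what the main text's remark about ``reduction in residual entropy'' amounts to.

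Finally, drop the claim that the ordering of $\Delta I_t$ matches that of the likelihood improvement. Because $\xi_t/\sigma^2$ varies with the residual scale, rankings can flip. A step from $\mathrm{MSE}_t=1$ to $\mathrm{MSE}_{t+1}=0.5$ has smaller $\Delta I_t$ (proportional to $\ln 2$) but larger fixed-$\sigma^2$ gain $\Delta\ell_t$ (proportional to $0.5$) than a step from $0.1$ to $0.04$ (proportional to $\ln 2.5$ and $0.06$ respectively). Only the sign is preserved, and this non-monotonicity is exactly the scale invariance the paper alludes to.
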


\begin{proof}
The log-likelihood is $\log p(x|z) = -\frac{N}{2}\log(2\pi\sigma^2) - \frac{1}{2\sigma^2}\|D(z)-x\|_2^2$.
The improvement in log-likelihood (in bits, base 2) is:
\begin{align}
\Delta \text{LL} &= \log_2 p(x|z_{t+1}) - \log_2 p(x|z_t) \\
&= \frac{1}{2\sigma^2 \ln 2} \left( \|D(z_t)-x\|_2^2 - \|D(z_{t+1})-x\|_2^2 \right).
\end{align}
Definition \ref{def:avgig} uses the ratio of MSEs. Note that for small steps, $\log(a/b) = \log(1 + \frac{a-b}{b}) \approx \frac{a-b}{b}$.
AvgIG is a robust, scale-invariant formulation of this likelihood ascent.
\textbf{Implication:} High AvgIG guarantees that the latent space provides \emph{actionable gradients} that rapidly increase the data likelihood. If AvgIG is low, the decoder is locally invariant to the token (vanishing gradients) or the loss landscape is ill-conditioned.
\end{proof}

\subsubsection{Smoothness Implies High MC}

We formally prove that geometric smoothness of the realism landscape implies an MC score close to 1.

\begin{theorem}[Lipschitz Smoothness Lower Bounds MC]
\label{thm:mc_smoothness}
Let $\mathcal{L}_{\text{real}}(z) := \mathcal{L}_\psi(D(z))$ be the composition of the decoder and realism loss.
Assume $\mathcal{L}_{\text{real}}$ is $K$-Lipschitz continuous on the convex hull of the token space:
$|\mathcal{L}_{\text{real}}(z) - \mathcal{L}_{\text{real}}(z')| \le K \|z - z'\|_2$.
Then, for any pair $(x^A, x^B)$ with token distance $\Delta_z = \|z^A - z^B\|_2$:
\begin{equation}
\mathrm{MC}(x^A, x^B) \;\ge\; \frac{1}{1 + \frac{K \Delta_z + |d_{\text{end}}|}{L_{\text{ref}} + \delta}},
\end{equation}
where $d_{\text{end}} = \mathcal{L}_{\text{real}}(z^A) - \mathcal{L}_{\text{real}}(z^B)$ is the endpoint discrepancy.
\end{theorem}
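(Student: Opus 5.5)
The plan is to bound the path maximum $L_{\max}$ from above in terms of the better endpoint $L_{\text{ref}}$, and then plug this into the ratio defining $\mathrm{MC}$ in Definition~\ref{def:mc}.

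\textbf{Step 1: bound along the path.} Without loss of generality, let $z^B$ be the better endpoint, so that $L_{\text{ref}} = \mathcal{L}_{\text{real}}(z^B)$. The other endpoint then satisfies $\mathcal{L}_{\text{real}}(z^A) = L_{\text{ref}} + |d_{\text{end}}|$. For any $u\in[0,1]$, the point $\gamma(u)$ lies on the segment, hence in the convex hull where the Lipschitz hypothesis holds. Since $\|\gamma(u)-z^B\|_2 = (1-u)\Delta_z \le \Delta_z$,
\[
\mathcal{L}_{\text{real}}(\gamma(u)) \le L_{\text{ref}} + K\Delta_z .
\]
This is already tighter than what the theorem requires. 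Taking the maximum over $u$ gives
\[
L_{\max} \le L_{\text{ref}} + K\Delta_z \le L_{\text{ref}} + K\Delta_z + |d_{\text{end}}| .
\]
Keeping the $|d_{\text{end}}|$ term, which is harmless since it only weakens the bound, recovers exactly the stated form. An alternative route anchors at $z^A$ and reaches the same expression. The case where $z^A$ is the better endpoint is symmetric.

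\textbf{Step 2: the ratio.} Set $M = K\Delta_z + |d_{\text{end}}| \ge 0$. Since $\mathcal{L}_\psi \ge 0$, we have $L_{\text{ref}} \ge 0$, and Step 1 gives
\[
\mathrm{MC} = \frac{L_{\text{ref}}}{L_{\max}+\delta} \ge \frac{L_{\text{ref}}}{L_{\text{ref}}+\delta+M} = \frac{1}{1+\frac{\delta+M}{L_{\text{ref}}}} .
\]
This is not literally the claimed expression, because the $\delta$ sits in a different place.

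\textbf{Main obstacle.} The difficulty is reconciling this with the statement, whose right-hand side is
\[
\frac{L_{\text{ref}}+\delta}{L_{\text{ref}}+\delta+M}.
\]
That quantity exceeds $\frac{L_{\text{ref}}}{L_{\text{ref}}+\delta+M}$. As written it is not a consequence of the argument above, and it can in fact fail: with $M=0$ it would force $\mathrm{MC}\ge 1$, while $\mathrm{MC}$ is strictly below $1$ for $\delta>0$.

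I would first try to interpret $\mathrm{MC}$ with a $\delta$-regularized numerator, $(L_{\text{ref}}+\delta)/(L_{\max}+\delta)$. Under that reading, Step 1 gives exactly the stated bound, by monotonicity of $a\mapsto a/(a+M)$. Otherwise, I would state the corrected bound
\[
\mathrm{MC} \ge \frac{L_{\text{ref}}}{L_{\text{ref}}+\delta+M},
\]
and note that it agrees with the displayed bound up to $O(\delta/L_{\text{ref}})$. Either way, the qualitative conclusion holds: $\mathrm{MC}\to 1$ as $K\Delta_z\to 0$, $|d_{\text{end}}|\to 0$ and $\delta\to 0$.
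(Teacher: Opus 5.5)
Your argument follows the paper's: anchor the Lipschitz estimate at the better endpoint to get $L_{\max}\le L_{\text{ref}}+K\Delta_z$, and pad it with $|d_{\text{end}}|$. The padding is redundant anyway, since the hypothesis already gives $|d_{\text{end}}|\le K\Delta_z$. Then substitute into $\mathrm{MC}=L_{\text{ref}}/(L_{\max}+\delta)$.

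Your objection to the final form is correct, and it points to an error in the paper's own proof, which ends with
\[
\frac{L_{\text{ref}}}{L_{\max}+\delta}\;\ge\;\frac{L_{\text{ref}}}{L_{\text{ref}}+K\Delta_z+|d_{\text{end}}|+\delta}\;=\;\frac{1}{1+\frac{K\Delta_z+|d_{\text{end}}|}{L_{\text{ref}}+\delta}}.
\]
The last equality is an algebraic slip. Dividing through by $L_{\text{ref}}$ puts $\delta$ in the numerator of the inner fraction, not its denominator.

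So what is actually proved, by you and by the paper, is your corrected bound $\mathrm{MC}\ge L_{\text{ref}}/(L_{\text{ref}}+\delta+M)$ with $M=K\Delta_z+|d_{\text{end}}|$. The theorem as displayed is false:
\begin{itemize}
\item For $x^A=x^B$ one has $M=0$, so the claimed right-hand side is $1$, yet $\mathrm{MC}=L_{\text{ref}}/(L_{\text{ref}}+\delta)<1$.
\item It also fails whenever $L_{\text{ref}}=0$, where $\mathrm{MC}=0<\delta/(\delta+M)$.
\end{itemize}

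Both of your repairs are sound. One small correction: under the regularized numerator, the step uses monotonicity in the denominator, not monotonicity of $a\mapsto a/(a+M)$.

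For the qualitative conclusion, you were right to include $\delta\to 0$. With $\delta$ fixed, $\mathrm{MC}\le L_{\text{ref}}/(L_{\text{ref}}+\delta)<1$ however small $\Delta_z$ is, which contradicts the paper's stated implication that $\mathrm{MC}$ approaches $1$. You also need $L_{\text{ref}}$ bounded away from $0$ for that limit to hold.
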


\begin{proof}
Let $\gamma(u) = (1-u)z^A + u z^B$. By Lipschitz continuity:
\begin{align}
L_{\max} &= \max_{u \in [0,1]} \mathcal{L}_{\text{real}}(\gamma(u)) \\
&\le \mathcal{L}_{\text{real}}(z^A) + \max_{u} K \|\gamma(u) - z^A\|_2 \\
&= \mathcal{L}_{\text{real}}(z^A) + K \|z^B - z^A\|_2.
\end{align}
Without loss of generality, assume $\mathcal{L}_{\text{real}}(z^A) \le \mathcal{L}_{\text{real}}(z^B)$, so $L_{\text{ref}} = \mathcal{L}_{\text{real}}(z^A)$.
Then $L_{\max} \le L_{\text{ref}} + K \Delta_z + (\mathcal{L}_{\text{real}}(z^B) - \mathcal{L}_{\text{real}}(z^A))$ if we started from the larger endpoint, but strictly bounded by $L_{\text{ref}} + K \Delta_z$ relative to the start point.
Taking the conservative bound $L_{\max} \le L_{\text{ref}} + K \Delta_z + |d_{\text{end}}|$, we substitute into the MC definition:
\begin{equation}
\mathrm{MC} = \frac{L_{\text{ref}}}{L_{\max} + \delta} \ge \frac{L_{\text{ref}}}{L_{\text{ref}} + K \Delta_z + |d_{\text{end}}| + \delta} = \frac{1}{1 + \frac{K \Delta_z + |d_{\text{end}}|}{L_{\text{ref}} + \delta}}.
\end{equation}
\end{proof}

\textbf{Implication:} As token distance $\Delta_z \to 0$ (which is true for nearby images in a continuous encoder), MC approaches 1. Low MC values prove the violation of the Lipschitz condition---i.e., the existence of sharp, high-loss barriers (``spikes'' or ``holes'') in the latent manifold.

\subsection{Connecting Proposed Losses to Diagnostics}
\label{sec:app_losses_to_metrics_proofs}

We now formally demonstrate how our training objectives optimize these diagnostics.

\subsubsection{Why Mutual Information ($\mathcal{L}_{\text{MI}}$) Improves AvgIG}

\begin{proposition}[MI Maximization Enforces Non-Zero Gradients]
\label{prop:mi_gradients}
Let the decoder be $p_\theta(x|z)$. The gradients used in LSO are $\nabla_z \log p_\theta(x|z)$.
Minimizing the MI loss $\mathcal{L}_{\text{MI}}$ effectively maximizes a lower bound on the gradient norms $\|\nabla_z \log p_\theta(x|z)\|$.
\end{proposition}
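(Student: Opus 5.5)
The plan is to make the informal statement precise through a Fisher-information argument and then link the MI loss to that Fisher information with a Barber--Agakov bound followed by a Cramér--Rao / van Trees style inequality. The quantity optimized by latent-space optimization is the score $s_\theta(x,z)=\nabla_z\log p_\theta(x|z)$. Its second moment under the decoder is the Fisher information
\[
\mathcal{I}(z)=\mathbb{E}_{x\sim p_\theta(\cdot|z)}\big[s_\theta s_\theta^\top\big],
\qquad
\mathbb{E}\|s_\theta\|^2=\mathrm{tr}\,\mathcal{I}(z).
\]
Under the Gaussian decoder of Lemma~\ref{lemma:avgig_likelihood}, this is $\mathcal{I}(z)=\sigma^{-2}J_D(z)^\top J_D(z)$, with $J_D$ the decoder Jacobian. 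So "maximizing a lower bound on gradient norms" will mean maximizing a lower bound on $\mathbb{E}_z\,\mathrm{tr}\,\mathcal{I}(z)$ or on $\mathbb{E}_z\,\mathrm{tr}\,\mathcal{I}(z)_{kk}$ per token. Tokens are treated as random, $z\sim p(z)$ (the encoder pushforward, or the swap distribution), with a smooth prior density.

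The first step is the InfoGAN/Barber--Agakov identity. For any recognition model $Q_\phi$,
\[
I(z;x)\ \ge\ H(z)+\mathbb{E}\log Q_\phi(z|x)\ =\ H(z)-\mathcal{L}_{\mathrm{MI}}
\]
(factorizing $Q_\phi$ over $k$ gives the per-token sum). Minimizing $\mathcal{L}_{\mathrm{MI}}$ over $(D,\phi)$ therefore raises a lower bound on $I(z;x)$, and on each $I(z_k;x)$ when the bound is applied coordinatewise. It also shrinks the posterior error: with $Q_\phi$ Gaussian of fixed variance $s^2$, $\mathcal{L}_{\mathrm{MI}}$ equals, up to constants, $\frac{1}{2s^2}\mathbb{E}\|z-\hat z_\phi(x)\|^2$. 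Small $\mathcal{L}_{\mathrm{MI}}$ thus certifies an estimator $\hat z_\phi$ with small mean-squared error.

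The second step turns small estimation error into large Fisher information using the Bayesian Cramér--Rao (van Trees) inequality. For any estimator,
\[
\mathbb{E}\|z-\hat z(x)\|^2\ \ge\ \frac{(Kd)^2}{\mathbb{E}_z\,\mathrm{tr}\,\mathcal{I}(z)+\mathrm{tr}\,\mathcal{J}(p)},
\]
where $\mathcal{J}(p)$ is the prior Fisher information. The same inequality holds per coordinate block $z_k$. Combining the two steps gives
\[
\mathbb{E}_z\,\mathrm{tr}\,\mathcal{I}(z)\ \ge\ \frac{(Kd)^2}{2s^2\,(\mathcal{L}_{\mathrm{MI}}-c)}-\mathrm{tr}\,\mathcal{J}(p),
\]
which is monotonically increasing as $\mathcal{L}_{\mathrm{MI}}$ decreases. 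This is the claimed lower bound on expected squared gradient norms. The per-token version rules out any $k$ with $J_D$ vanishing in its columns, i.e.\ token neglect.

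The main obstacle is the conditions needed for van Trees. The prior must vanish on the boundary of its support and the decoder likelihood must be differentiable in $z$; a non-Gaussian $Q_\phi$ also requires replacing the MSE reading of $\mathcal{L}_{\mathrm{MI}}$. I would first handle the Gaussian $Q_\phi$ with Gaussian decoder case. I would then state the general case with the Fisher bound on $I(z;x)$ via Efroimovich's inequality: $\exp\!\big(\tfrac{2}{Kd}h(z|x)\big)\cdot\tfrac{1}{Kd}\,\mathbb{E}\,\mathrm{tr}\,\mathcal{I}\ge 2\pi e$-type bounds.

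A second gap is the link to AvgIG. The bound controls the score at samples $x\sim p_\theta(\cdot|z)$ near the truth, whereas latent-space optimization evaluates the gradient along a trajectory. I would handle this with a closing remark using Lemma~\ref{lemma:avgig_likelihood}: one gradient step improves the log-likelihood by approximately $\eta\|s_\theta\|^2$. The per-step gain is therefore bounded below in expectation, under a local smoothness assumption, which is where the "effectively" qualifier in the statement comes in.
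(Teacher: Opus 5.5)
Your argument is correct under the regularity you flag, but it follows a genuinely different and sharper route than the paper. The paper argues qualitatively, by contrapositive. If $D$ were locally invariant to some $z_k$, the decoded image would carry no information about $z_k$, so $I(Z_k;X)\approx 0$ and $\mathcal{L}_{\mathrm{MI}}$ would stay large. Hence small $\mathcal{L}_{\mathrm{MI}}$ forces ``sensitivity'', which the paper identifies with $\mathbb{E}\|\nabla_z D(z)\|^2>0$. It mentions a Fisher-information formulation but never uses it, and it establishes only positivity, not an actual lower bound.

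Your Barber--Agakov plus van Trees chain gives an explicit bound on $\mathbb{E}_z\,\mathrm{tr}\,\mathcal{I}(z)=\mathbb{E}\|\nabla_z\log p_\theta(x|z)\|^2$. The bound is monotone in $\mathcal{L}_{\mathrm{MI}}$ and also holds per token. Under the Gaussian decoder, $\mathrm{tr}\,\mathcal{I}=\sigma^{-2}\|J_D\|_F^2$, so your result is a quantitative version of the paper's conclusion.

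Your route also exposes a hypothesis that the paper's step ``high MI requires sensitivity'' uses without saying so. A piecewise-constant decoder can have large $I(Z;X)$ while $\nabla_z D=0$ almost everywhere. The differentiability (absolute-continuity) condition in van Trees is exactly what rules this out.

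What your route costs is the prior assumption: you need finite $\mathrm{tr}\,\mathcal{J}(p)$, and the bound becomes vacuous when the encoder pushforward is nearly degenerate. The paper's heuristic needs no such assumption, but it also proves nothing quantitative. Neither argument makes rigorous the step from Fisher information at $x\sim p_\theta(\cdot|z)$ to gradients at the LSO iterates. You at least name this gap, whereas the paper simply asserts it.

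One correction to your general-$Q_\phi$ sketch: Efroimovich's inequality needs the prior term,
$\tfrac{1}{Kd}\,e^{2h(z|x)/(Kd)}\big(\mathbb{E}\,\mathrm{tr}\,\mathcal{I}+\mathrm{tr}\,\mathcal{J}(p)\big)\ge 2\pi e$.
As you wrote it, a decoder that ignores $z$ (so $\mathcal{I}=0$) violates the inequality. With the prior term restored, $h(z|x)\le\mathcal{L}_{\mathrm{MI}}$ gives, for any factorized $Q_\phi$,
$\mathbb{E}\,\mathrm{tr}\,\mathcal{I}\ge 2\pi e\,Kd\,e^{-2\mathcal{L}_{\mathrm{MI}}/(Kd)}-\mathrm{tr}\,\mathcal{J}(p)$.
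This needs no MSE reading of $\mathcal{L}_{\mathrm{MI}}$. It coincides with your van Trees bound when $s^2$ equals the per-coordinate MSE.
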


\begin{proof}
The InfoGAN-style loss is $\mathcal{L}_{\text{MI}} \approx - \mathbb{E}_{z, x \sim D(z)} [\log q_\phi(z|x)]$.
This optimizes a variational lower bound on the Mutual Information $I(Z; X)$.
Consider the Fisher Information formulation. High mutual information implies that $Z$ strongly predicts $X$.
If the decoder were locally invariant to a token dimension $z_k$ (i.e., $\nabla_{z_k} D(z) \approx 0$), then changing $z_k$ would not change $x$, making $x$ uninformative about $z_k$. This results in $I(Z_k; X) \approx 0$ and high $\mathcal{L}_{\text{MI}}$.
Conversely, minimizing $\mathcal{L}_{\text{MI}}$ forces $I(Z; X)$ to be high, which requires the decoder to be \emph{sensitive} to changes in $z$. Sensitivity is mathematically equivalent to $\mathbb{E}[\|\nabla_z D(z)\|^2] > 0$.
Since AvgIG is driven by the magnitude of the gradient $\nabla_z \mathcal{L}_{\text{rec}}$, ensuring non-zero sensitivity directly increases AvgIG.
\end{proof}

\subsubsection{Why Swap Training ($\mathcal{L}_{\text{swap}}$) Improves Local MC}

\begin{proposition}[Swap Training Bounds Path Loss]
\label{prop:swap_mc}
Let $z^{\text{swap}}$ be a token formed by swapping tokens between $z^A$ and $z^B$. $z^{\text{swap}}$ lies on the boundary of the hyper-rectangle defined by $z^A, z^B$.
Minimizing $\mathcal{L}_{\text{rec}}^{\text{swap}}(z^{\text{swap}})$ minimizes the realism loss at intermediate points along the interpolation path $\gamma(u)$.
\end{proposition}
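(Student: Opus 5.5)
The plan has two parts. First we pin down the geometric claim, then we turn ``minimizing the swap loss minimizes the path loss'' into a quantitative bound, using the Lipschitz machinery of Theorem~\ref{thm:mc_smoothness}.

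\emph{Geometry.} Write $z^A, z^B \in \mathbb{R}^{K\times d}$ and consider the axis-aligned box
\[
R=\prod_{k,j}\bigl[\min(z^A_{kj},z^B_{kj}),\,\max(z^A_{kj},z^B_{kj})\bigr].
\]
Every coordinate of a swapped token $z^{\text{swap}}$ equals an endpoint of its interval, so $z^{\text{swap}}$ is a vertex of $R$. In particular it lies on $\partial R$.

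The interpolation path $\gamma(u)$ is a diagonal of $R$. We use this to locate each path point near a vertex. Write $\gamma(u)=z^{\text{swap}}_{\mathcal{K}}+r$, where $z^{\text{swap}}_{\mathcal{K}}$ is the swap with index set $\mathcal{K}=\{k : u\ge 1/2\}$ in the token-group sense, so it is either $z^A$, $z^B$, or a swap. The residual satisfies
\[
\|r\|_2 \le \min(u,1-u)\,\Delta_z \le \tfrac12\Delta_z .
\]
A finer choice is available: take the swap that agrees with $\gamma(u)$ on the token groups where it is closest. Then each path point lies within $\tfrac12\Delta_z$ of a trained swap vertex, and for the tightest bound we choose the nearest such vertex.

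\emph{Loss bound.} Training with $\mathcal{L}^{\text{swap}}_{\text{MI}}$ together with $\mathcal{L}_{\text{AFM}}$ drives $\mathcal{L}_{\text{real}}(z^{\text{swap}})\le \varepsilon_{\text{swap}}$ for all sampled swap masks. Here we must interpret the statement's $\mathcal{L}^{\text{swap}}_{\text{rec}}$ as the realism term on swapped decodes, since swaps have no targets. We then apply the local Lipschitz step from the proof of Theorem~\ref{thm:mc_smoothness}, with the vertex in place of $z^A$:
\[
\mathcal{L}_{\text{real}}(\gamma(u)) \le \mathcal{L}_{\text{real}}(z^{\text{swap}}_{\mathcal{K}(u)}) + K\|r\|_2 .
\]
Taking the maximum over $u$ gives
\[
L_{\max}\le \max\bigl(\varepsilon_{\text{swap}},\mathcal{L}_{\text{real}}(z^A),\mathcal{L}_{\text{real}}(z^B)\bigr)+\tfrac12 K\Delta_z .
\]
Substituting into the MC ratio exactly as in Theorem~\ref{thm:mc_smoothness} yields a lower bound on MC. That bound improves monotonically as the swap losses decrease, and its distance term is halved relative to the endpoint-only bound.

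\emph{Main obstacle.} The difficulty is that the claim is monotone only through a bound: decreasing vertex losses does not literally decrease interior losses unless regularity holds between vertices. I would handle this by keeping the Lipschitz assumption of Theorem~\ref{thm:mc_smoothness} explicit. I would also note a possible refinement, namely that a $\beta$-smoothness or convexity-along-edges assumption gives interior control via interpolation of vertex values, with a $\beta\Delta_z^2/8$ correction. A second subtlety is that swaps are taken at token-group granularity rather than per coordinate, so the vertex approximation only uses group-level vertices. The residual bound must then be computed groupwise, which still gives $\|r\|_2\le\tfrac12\Delta_z$.
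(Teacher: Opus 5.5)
\textbf{Gap: the swap vertices never enter your bound.} Your route is the paper's route, made quantitative: Lipschitz continuity plus ``every point of $\gamma$ is near a trained swap vertex''. The quantification exposes a gap that the paper's proof shares. The geometric part (swaps are vertices of $R$) is fine. The problem is the next step. With $\delta_k=z^B_k-z^A_k$ and $z^{\mathrm{swap}}_S$ the swap taking the groups in $S$ from $z^B$,
\[
\|\gamma(u)-z^{\mathrm{swap}}_S\|_2^2=u^2\sum_{k\notin S}\|\delta_k\|_2^2+(1-u)^2\sum_{k\in S}\|\delta_k\|_2^2\;\ge\;\min(u,1-u)^2\,\Delta_z^2 ,
\]
so the vertex nearest to $\gamma(u)$ is always $z^A$ (for $u\le\tfrac12$) or $z^B$ (for $u\ge\tfrac12$). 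Your own constructions show this. The set $\{k:u\ge\tfrac12\}$ does not depend on $k$, and the groupwise-closest choice picks the same endpoint in every group, so neither ever yields a proper swap.

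Write $\kappa$ for the Lipschitz constant, keeping $K$ for the number of groups. Lipschitz continuity from the nearer endpoint already gives $L_{\max}\le\max(\mathcal{L}_{\text{real}}(z^A),\mathcal{L}_{\text{real}}(z^B))+\tfrac12\kappa\Delta_z$ with no swap training at all. Your bound, with $\varepsilon_{\text{swap}}$ inside the max, is implied by this one. So $\varepsilon_{\text{swap}}$ can only loosen it, and the halved distance term comes from the nearer endpoint, not from swaps. ``The bound improves as swap losses decrease'' is therefore vacuous, and the proposition's content is not established.

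The paper's step (``for any $u$ there exists a $z^{\text{swap}}$ with $\|\gamma(u)-z^{\text{swap}}\|$ small'', over a ``dense'' set) fails the same way. $\mathcal{S}_{AB}$ has at most $2^K$ points, none closer to the path than the endpoints. The same happens under any hypothesis that controls $\mathcal{L}_{\text{real}}$ along the diagonal direction itself. This includes the full $\beta$-smoothness in your refinement, which gives $\mathcal{L}_{\text{real}}(\gamma(u))\le(1-u)\mathcal{L}_{\text{real}}(z^A)+u\,\mathcal{L}_{\text{real}}(z^B)+\tfrac{\beta}{2}u(1-u)\Delta_z^2$ from the endpoints alone.

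\textbf{What works: a group-wise hypothesis.} Use your refinement under a deliberately weaker assumption. Assume only $\delta_k^{\top}\nabla^2\mathcal{L}_{\text{real}}(z)\,\delta_k\ge-\beta\|\delta_k\|_2^2$ for $z\in R$ and each group $k$, leaving the cross-group Hessian blocks (token interactions) unconstrained. Let $h$ be the restriction of $\mathcal{L}_{\text{real}}$ to the segment moving group $k$ from $z^A_k$ to $z^B_k$. Apply $h(u)\le(1-u)h(0)+u\,h(1)+\tfrac{\beta}{2}u(1-u)\|\delta_k\|_2^2$ one group at a time, with the other groups frozen at their current values in $\{0,u,1\}$. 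This gives
\[
\mathcal{L}_{\text{real}}(\gamma(u))\;\le\;\sum_{S\subseteq\{1,\dots,K\}}u^{|S|}(1-u)^{K-|S|}\,\mathcal{L}_{\text{real}}(z^{\mathrm{swap}}_S)\;+\;\tfrac{\beta}{2}\,u(1-u)\,\Delta_z^2 ,
\]
with $z^{\mathrm{swap}}_{\emptyset}=z^A$ and $z^{\mathrm{swap}}_{\{1,\dots,K\}}=z^B$. So the loss at $\gamma(u)$ is at most the expected loss of a random swap whose mask has i.i.d.\ $\mathrm{Bernoulli}(u)$ entries, plus at most $\beta\Delta_z^2/8$. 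Proper swaps carry weight $1-u^K-(1-u)^K$, which is nearly all of it for intermediate $u$ and large $K$.

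Here the swaps are indispensable. Take $K=2$, $\beta=0$, and $\mathcal{L}_{\text{real}}$ bilinear in the two group coordinates with zero loss at both endpoints. Then the midpoint loss is exactly one quarter of the sum of the two swap losses, so without controlling swaps the hypothesis says nothing about the barrier.

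Now suppose training ensures swap losses at most $\max(\mathcal{L}_{\text{real}}(z^A),\mathcal{L}_{\text{real}}(z^B))$, for all masks or on average under these Bernoulli masks for each $u$. Then you get $L_{\max}\le\max(\mathcal{L}_{\text{real}}(z^A),\mathcal{L}_{\text{real}}(z^B))+\beta\Delta_z^2/8$. The MC lower bound then follows as in Theorem~\ref{thm:mc_smoothness}.
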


\begin{proof}
The MC metric penalizes $L_{\max} = \max_{u} \mathcal{L}_\psi(D(\gamma(u)))$.
While $\gamma(u)$ is a continuous diagonal path, the swapped token $z^{\text{swap}}$ represents a ``vertex'' move on the hypercube between $z^A$ and $z^B$.
Let $\mathcal{S}_{AB}$ be the set of all token-swapped tokens between $z^A$ and $z^B$.
Swap training minimizes $\mathbb{E}_{z \in \mathcal{S}_{AB}} [\mathcal{L}_{\text{rec}}(z)]$.
Assuming the decoder is smooth (locally Lipschitz with constant $K$), the loss at any point $\gamma(u)$ on the diagonal is bounded by the loss at the nearest swapped vertices.
Specifically, for any $u$, there exists a $z^{\text{swap}}$ such that $\|\gamma(u) - z^{\text{swap}}\|$ is small.
By minimizing the loss at the dense set of swapped vertices $\mathcal{S}_{AB}$, we lower the upper bound of the loss surface over the entire region, thereby reducing $L_{\max}$ and increasing the MC ratio.
\end{proof}

\subsubsection{Why AFM ($\mathcal{L}_{\text{AFM}}$) Improves Local MC}

\begin{proposition}[Adversarial Alignment Minimizes $L_{\max}$]
\label{prop:afm_mc}
The AFM objective minimizes the Jensen-Shannon divergence (or equivalent adversarial metric) between the distribution of swapped decodes $P_{\text{swap}}$ and real data $P_{\text{data}}$.
Minimizing $\mathcal{L}_{\text{AFM}}$ implies $P_{\text{swap}} \to P_{\text{data}}$ in distribution.
This maximizes the expected MC score.
\end{proposition}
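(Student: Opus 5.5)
The plan has three links: the adversarial objective yields distributional alignment, alignment yields low realism loss on swapped decodes, and low loss on swapped decodes yields high MC through the interpolation bounds already proved. The statement is informal, so I first make it precise. I treat the realism term of $\mathcal{L}_{\mathrm{AFM}}$ as a GAN-type objective with an (approximately) optimal discriminator $D_\psi$. Then the standard Goodfellow argument applies: at the optimal discriminator, the generator objective equals $2\,\mathrm{JS}(P_{\mathrm{data}}\|P_{\mathrm{swap}})$ up to constants, or the corresponding relativistic divergence for the softplus form. I will simply invoke this known fact rather than reprove it. Its minimizer is $P_{\mathrm{swap}}=P_{\mathrm{data}}$. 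The time-conditioning only gives a family of divergences between the diffused marginals $P_t$. Each vanishes iff the $t=0$ laws coincide, because Gaussian convolution is injective. So minimizing over all $t$ still forces $P_{\mathrm{swap}}\to P_{\mathrm{data}}$.

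Second, I translate distributional alignment into a statement about the realism loss $\mathcal{L}_\psi=-\log p_\psi+C$. Suppose $p_\psi$ is the density model whose support and level sets approximate $P_{\mathrm{data}}$. Then
\[
\mathbb{E}_{\tilde x\sim P_{\mathrm{swap}}}[\mathcal{L}_\psi(\tilde x)] = H(P_{\mathrm{swap}},p_\psi)+C.
\]
This cross-entropy is $\mathrm{KL}(P_{\mathrm{swap}}\|p_\psi)+H(P_{\mathrm{swap}})$, which tends to $H(P_{\mathrm{data}})+C$, the value attained by real decodes. Hence in expectation the swapped decodes incur the same realism loss as the endpoints' decodes, and the expected realism gap between swap vertices and endpoints tends to $0$.

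Third, I transfer this from swap vertices to the diagonal path $\gamma(u)$ using Proposition~\ref{prop:swap_mc}. Under the Lipschitz assumption of Theorem~\ref{thm:mc_smoothness}, every $\gamma(u)$ lies within some distance $r$ of a swap vertex $z^{\mathrm{swap}}\in\mathcal{S}_{AB}$. This gives $L_{\max}\le \max_{z\in\mathcal{S}_{AB}}\mathcal{L}_{\mathrm{real}}(z)+Kr$. Substituting into the MC ratio gives a lower bound of the form
\[
L_{\mathrm{ref}}\big/\big(L_{\mathrm{ref}}+\Delta_{\mathrm{swap}}+Kr+\delta\big),
\qquad \Delta_{\mathrm{swap}}:=\max_{z\in\mathcal{S}_{AB}}\mathcal{L}_{\mathrm{real}}(z)-L_{\mathrm{ref}}.
\]
This bound is monotone decreasing in the swap gap $\Delta_{\mathrm{swap}}$, so the expected MC increases as that gap shrinks.

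The main obstacle is the passage from convergence in distribution, or in expectation, to control of a pathwise maximum. JS convergence does not bound the worst swapped decode for a given pair, and $x\mapsto 1/(a+x)$ is only convex, not linear. I would handle this in two steps. First, I would bound MC via Jensen, $\mathbb{E}[\mathrm{MC}]\ge \mathbb{E}\,L_{\mathrm{ref}}/(\cdots)$ only under boundedness. Second, I would add a concentration or uniform-integrability assumption on $\mathcal{L}_\psi$ over $\mathcal{S}_{AB}$. With Markov's inequality, this makes the fraction of pairs whose swap gap exceeds $\tau$ vanish, so $\mathbb{E}[\mathrm{MC}]$ approaches its smoothness-limited value from Theorem~\ref{thm:mc_smoothness}. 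The honest conclusion is therefore \emph{monotone improvement under stated assumptions}, not literal maximization, and I would phrase the final claim accordingly.
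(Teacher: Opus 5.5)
Your route is the paper's own, made more explicit: the adversarial optimum gives $P_{\mathrm{swap}}=P_{\mathrm{data}}$, so swapped decodes have near-real realism loss, so $L_{\max}$ drops because swaps are ``near'' the path. The gap is in the step you share with the paper, namely the transfer from swap vertices to the diagonal. Write $z^S$ for the swap that takes $z^B_k$ for $k\in S$ and $z^A_k$ otherwise, and set $d_k=\|z^A_k-z^B_k\|_2$. Then
\[
\|\gamma(u)-z^S\|_2^2=u^2\sum_{k\notin S}d_k^2+(1-u)^2\sum_{k\in S}d_k^2 .
\]
This is minimized by $S=\emptyset$ for $u<1/2$ and by $S=\{1,\dots,K\}$ for $u>1/2$, and it equals $\Delta_z^2/4$ for \emph{every} $S$ at $u=1/2$.

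So there are two cases. If your $r$ is below $\Delta_z/2$, the covering claim fails at the midpoint. If $r\ge\Delta_z/2$, that radius is already achieved by the two endpoints, and no nontrivial swap is ever closer to a point of $\gamma$ than the nearer of $z^A,z^B$. Running the Lipschitz estimate from the nearer endpoint gives $L_{\max}\le L_{\mathrm{ref}}+|d_{\mathrm{end}}|+K\Delta_z/2$, with no reference to swaps. Since $z^A,z^B\in\mathcal{S}_{AB}$ forces $\Delta_{\mathrm{swap}}\ge|d_{\mathrm{end}}|$, this is at least as strong as your $L_{\mathrm{ref}}+\Delta_{\mathrm{swap}}+Kr$ for every value of $\Delta_{\mathrm{swap}}$.

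Steps 1--2, i.e.\ everything about AFM, therefore drop out of the best available estimate. Shrinking $\Delta_{\mathrm{swap}}$ only recovers what smoothness gives for free. Raising a lower bound that is never the operative one says nothing about MC itself, so ``the expected MC increases as that gap shrinks'' does not follow. Citing Proposition~\ref{prop:swap_mc} does not help: its claim that every $\gamma(u)$ has a swap vertex at small distance is exactly what the midpoint computation refutes. The paper's parenthetical that swapped tokens ``lie near the interpolation path'' has the same defect.

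What is missing is a structural hypothesis under which vertex values control the interior of the box $\prod_k[z^A_k,z^B_k]$, which contains the diagonal. By the same computation, no Lipschitz hypothesis in a norm monotone in the per-token distances can supply it. One hypothesis that works: suppose $s\mapsto\mathcal{L}_{\mathrm{real}}\bigl(((1-s_k)z^A_k+s_kz^B_k)_{k=1}^K\bigr)$ is quasi-convex in each $s_k$ separately on $[0,1]^K$. Maximizing one coordinate at a time then shows the box maximum is attained at a vertex. This gives $L_{\max}\le L_{\mathrm{ref}}+\Delta_{\mathrm{swap}}$ and $\mathrm{MC}\ge L_{\mathrm{ref}}/(L_{\mathrm{ref}}+\Delta_{\mathrm{swap}}+\delta)$, a bound in which swap realism genuinely matters.

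Even then, two problems remain. First, your Step 2 controls $\mathcal{L}_{\mathrm{real}}$ only in expectation over the training distribution of pairs and masks, whereas $\Delta_{\mathrm{swap}}$ is the worst of $2^K$ vertices for a fixed pair. Markov applied to that expectation bounds the fraction of (pair, mask) draws with a large gap, not the fraction of pairs whose worst mask is bad. The concentration assumption you propose therefore amounts to assuming uniform realism on all vertices, which is essentially the conclusion. Second, $L_{\max}\ge L_{\mathrm{ref}}$ gives $\mathrm{MC}\le L_{\mathrm{ref}}/(L_{\mathrm{ref}}+\delta)$, and AFM also lowers the endpoint losses. So monotonicity in $\Delta_{\mathrm{swap}}$ at fixed $L_{\mathrm{ref}}$ is not yet monotonicity along training.
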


\begin{proof}
Recall the MC definition relies on $L_{\max}$, the maximum realism loss (inverse likelihood) along the path.
High $L_{\max}$ corresponds to ``off-manifold'' artifacts (low probability under $P_{\text{data}}$).
The adversarial generator loss is:
\begin{equation}
\min_D \mathcal{L}_{\text{adv}} = \min_D \mathbb{E}_{z \sim \text{Swaps}} [\text{softplus}(D_\psi(D(z)) - D_\psi(x))].
\end{equation}
The global optimum of this objective is achieved when the distribution of generated swapped images exactly matches the real data distribution ($P_{\text{swap}} = P_{\text{data}}$).
If $P_{\text{swap}} = P_{\text{data}}$, then the support of swapped decodes lies entirely within the high-density region of real images.
Consequently, for any swapped token $z^{\text{swap}}$ (which lies near the interpolation path), the realism loss $\mathcal{L}_\psi(D(z^{\text{swap}}))$ is minimized (approaching the loss of real data).
This minimizes the denominator $L_{\max} + \delta$ in the MC ratio, pushing the MC score towards its theoretical maximum of 1.
\end{proof}

\end{document}